\documentclass{article}
\usepackage[preprint]{neurips_2026}
\usepackage[utf8]{inputenc}
\usepackage[T1]{fontenc}
\usepackage{hyperref}
\usepackage{url}
\usepackage{booktabs}
\usepackage{amsfonts}
\usepackage{amsmath,amssymb,amsthm}
\usepackage{nicefrac}
\usepackage{microtype}
\usepackage{xcolor}
\usepackage{algorithm}
\usepackage{algorithmic}
\usepackage{enumitem}
\usepackage{graphicx}
\usepackage{subcaption}
\usepackage{multirow}
\usepackage{tikz}
\usetikzlibrary{arrows.meta, positioning, fit, backgrounds}
\usepackage{placeins}
\usepackage{float}
\graphicspath{{figures/}}

\newtheorem{theorem}{Theorem}

\newtheorem{proposition}[theorem]{Proposition}
\newtheorem{corollary}[theorem]{Corollary}
\newtheorem{definition}{Definition}
\newtheorem{assumption}{Assumption}
\newtheorem{remark}{Remark}
\newcommand{\Estate}{E}
\newcommand{\Wstate}{W}
\newcommand{\spec}{\mathcal{S}}
\newcommand{\adapt}{\mathrm{AdaptCost}}
\newcommand{\comp}{\mathrm{comp}}

\title{Revisable by Design: A Theory of Streaming LLM Agent Execution\thanks{Code, data, and benchmark: \url{https://github.com/zhiyuanZhai20/stream-agent}.}}
\author{%
  Zhiyuan Zhai\\
  Fudan University\\
  \texttt{22110720067@m.fudan.edu.cn}
  \And
  Ming Li\thanks{Co-corresponding authors.}\\
  Guangming Lab\\
  \texttt{ming.li@u.nus.edu}
  \And
  Xin Wang\footnotemark[\value{footnote}]\\
  Fudan University\\
  \texttt{xwang11@fudan.edu.cn}
}

\begin{document}
\maketitle

\begin{abstract}
Current LLM agents operate under an implicit but universal assumption: execution is a \emph{transaction}---the user submits a request, the agent works in isolation, and only upon completion does the dialogue resume. This forces users into a binary choice: wait for a potentially incorrect output, or interrupt and lose all progress.

We reject this assumption and propose the \textbf{stream paradigm}, in which agent execution and user intervention are concurrent, interleaved processes sharing a bidirectional channel. We formalize this paradigm through a \emph{reversibility taxonomy} that classifies every agent action as Idempotent, Reversible, Compensable, or Irreversible, and arrive at a core conclusion: \textbf{an agent's flexibility is bounded by its reversibility}. We prove that conflicting compensable actions impose unavoidable adaptation costs and that conflicting irreversible actions make full specification satisfaction impossible---these costs are properties of the action space, not of the algorithm. Guided by this insight, we present the \textbf{Revision Absorber}, a reactive algorithm based on the \emph{Earliest-Conflict Rollback} rule that is structurally optimal under mild assumptions. Experiments on StreamBench with real LLM agents validate all predictions: the Absorber matches the quality of a brute-force full-restart baseline while wasting an order of magnitude fewer steps of already-completed work, turning mid-execution revisions from a dead-end into a first-class interaction.
\end{abstract}

\section{Introduction}
\label{sec:intro}

Large language model (LLM) agents---systems that interleave reasoning, tool use, and environmental observation to accomplish complex tasks~\citep{yao2023react,shinn2023reflexion}---have become a central paradigm in applied AI. Systems like Claude Code, OpenAI Codex agents, and Devin-style coding assistants routinely execute multi-step workflows spanning minutes to hours.

Throughout this paper, we use the term \emph{agent} to refer specifically to tool-using LLM systems that execute multi-step plans with observable side effects (e.g., file writes, API calls, message sends), as distinct from single-turn chat or pure reasoning models.

A striking property of all current agent architectures is an unstated assumption: \emph{execution is a transaction}. The user submits a query, the agent enters a closed execution loop (plan $\to$ act $\to$ observe $\to$ re-plan), and only upon completion---or failure---does the user regain control. We call this the \textbf{transactional assumption}.

The transactional assumption creates a fundamental tension. As agents tackle longer-horizon tasks, the probability that the user's intent \emph{evolves} during execution increases. The user watches intermediate outputs stream by and realizes: the direction is wrong, a constraint was omitted, priorities have shifted. Under the transactional model, the user faces a binary: wait for a potentially incorrect result, or interrupt and discard all progress. We propose a third path: an \emph{open} user--agent channel in which revisions arrive during execution and are absorbed on-the-fly (visual summary in Fig.~\ref{fig:overview}, after the taxonomy is formally defined).

\paragraph{This paper.} We propose replacing the transactional assumption with the \textbf{stream paradigm}: agent execution and user intervention coexist as two concurrent processes on a shared bidirectional channel. The agent continuously streams its execution to the user; the user may inject revisions at any time without blocking the agent. By formalizing this paradigm and analyzing what determines an agent's capacity to absorb revisions, we arrive at a core conclusion:

\begin{quote}
\emph{An agent's flexibility is bounded by its reversibility.}
\end{quote}

No matter how sophisticated the planner, once an agent has executed actions with irreversible effects on the external world, the set of reachable final outcomes is permanently reduced. Revisions requiring outcomes outside this set incur compensation costs, wasted work, or outright failure. This is not an algorithmic limitation---it is a structural property of the action space itself.

\paragraph{Contributions.}
\begin{enumerate}[leftmargin=*,itemsep=2pt]
\item \textbf{The stream paradigm} (\S\ref{sec:stream}). We propose a new interaction model in which the agent executes autonomously while the user may inject revisions at any time through a non-blocking bidirectional channel. We formalize this as the \emph{agent execution stream}, a typed event sequence over four event kinds: action, thought, observation, and injection.

\item \textbf{Formalization of reversibility} (\S\ref{sec:stream}--\ref{sec:revision}). We decompose agent state into two layers---epistemic (always rollbackable) and world (partially rollbackable)---and introduce a \emph{reversibility taxonomy} $\{I, R, K, X\}$ that classifies every action by its effect on world state. We define the \emph{reversibility ratio} $\rho(T)$ of a task, the \emph{adaptation cost} of absorbing a revision, and a minimal taxonomy of revision types.

\item \textbf{Core theoretical result: flexibility is bounded by reversibility} (\S\ref{sec:theory}). We prove that conflicting $K$-class actions impose unavoidable compensation costs and conflicting $X$-class actions make full specification satisfaction impossible (Proposition~\ref{prop:irreversibility-cost}). These costs are properties of the action space, not of the algorithm.

\item \textbf{The Revision Absorber algorithm} (\S\ref{sec:algorithm}). Guided by the above insight, we present a reactive algorithm based on the \emph{Earliest-Conflict Rollback} rule. We prove this rule is structurally optimal under mild assumptions (Theorem~\ref{thm:structural-opt}). The algorithm requires no cost estimation, no search, and at most $O(m)$ LLM calls.

\item \textbf{Experimental validation} (\S\ref{sec:experiments}). On StreamBench with DeepSeek-V3 as the agent ($n{=}1{,}008$ runs), the Absorber achieves quality statistically indistinguishable from Full-Restart while wasting $14.6\times$ fewer steps, empirically confirming all theoretical predictions.
\end{enumerate}

\section{Related Work}
\label{sec:related}

\paragraph{LLM agent architectures.}
ReAct~\citep{yao2023react} established the interleaved thought--action--observation loop that underlies most modern agents. Reflexion~\citep{shinn2023reflexion} added a self-correction mechanism in which the agent critiques its own output after task completion and retries. Plan-and-Execute architectures~\citep{wang2023planandsolve} separate high-level planning from step-level execution, enabling dynamic replanning when a step fails. A common thread in all these architectures is that the user's role ends at $t=0$: the initial query fully specifies the task, and the agent operates autonomously until completion. None models the possibility that the user's intent may evolve \emph{during} execution, and none reasons about the reversibility of the actions the agent takes.

\paragraph{Human-in-the-loop agents.}
LangGraph's \texttt{interrupt()} mechanism~\citep{langgraph2024} enables synchronous, agent-initiated checkpoints where the agent pauses to request human approval. Collaborative Gym (Co-Gym)~\citep{shao2024cogym} provides an asynchronous tripartite framework for human--agent--environment interaction and demonstrates that collaborative agents outperform autonomous ones. CowPilot~\citep{huq2025cowpilot} supports human intervention in web navigation via a suggest-then-execute model. However, all of these works remain fundamentally \emph{turn-based}: the agent either pauses and waits for human input (LangGraph), or the human and agent alternate in a structured protocol (Co-Gym, CowPilot). None realizes a true bidirectional non-blocking channel in which the agent executes naturally without distortion while the user injects revisions at arbitrary times and the agent absorbs them on the fly. Furthermore, none provides theoretical guarantees on adaptation cost or formal characterization of what determines adaptability.

\paragraph{Asynchronous LLM serving and runtimes.}
High-throughput serving systems such as vLLM~\citep{kwon2023vllm} and SGLang~\citep{zheng2024sglang} target token-level latency and throughput for many \emph{independent} requests through paged attention and structured program execution. Agent runtimes---LangGraph~\citep{langgraph2024}, AutoGen~\citep{wu2023autogen}, CrewAI~\citep{crewai2024}, Google's ADK~\citep{google2024adk}---provide bidirectional streaming between agents and tools for inter-agent coordination, but the user--agent relationship remains transactional: none provides a mechanism for the user to inject mid-execution revisions that the agent absorbs on the fly.

\paragraph{Plan repair in classical AI.}
Plan repair~\citep{fox2006plan,nebel1995plan} modifies partially executed plans when execution diverges from expectations. HTN plan repair~\citep{holler2020htn} extends this to hierarchical task networks, and recent benchmarks~\citep{valmeekam2023planbench} evaluate whether LLMs can reason about plan change. The plan commitment property~\citep{babli2023commitment} minimizes disruption to other agents in shared environments. These works operate within a well-studied setting---\emph{dynamic environments with fixed goals}---where the agent reacts to environmental changes while pursuing an unchanging objective. Our work is orthogonal, not incremental: we introduce a \emph{new paradigm} in which the agent and the user share a persistent bidirectional channel, the agent executes autonomously, and the user may inject goal revisions at any time. This setting is neither a special case nor a generalization of plan repair; it addresses a distinct source of plan invalidation (user-initiated goal change rather than environment dynamics) and identifies a distinct binding constraint (action reversibility rather than plan structure).

\section{The Stream Paradigm}
\label{sec:stream}

\subsection{Agent Execution Streams}

\begin{definition}[Agent Execution Stream]
An \emph{agent execution stream} is a finite sequence $\sigma = (\varepsilon_1, \varepsilon_2, \ldots, \varepsilon_n)$ of typed events, where each event $\varepsilon_i$ belongs to one of four types:
\begin{itemize}[leftmargin=*,itemsep=1pt]
    \item $\mathtt{act}(a, t)$: the agent executed action $a$ at time $t$, affecting world state $W$.
    \item $\mathtt{thk}(r, t)$: the agent emitted reasoning $r$ at time $t$, affecting epistemic state $E$ only.
    \item $\mathtt{obs}(o, t)$: the environment returned observation $o$ at time $t$, providing information.
    \item $\mathtt{inj}(\phi, t)$: the user injected revision $\phi$ at time $t$, exogenous and unpredictable.
\end{itemize}
Events are ordered by time. The first three types are \emph{endogenous}---generated by the agent--environment loop. The fourth is \emph{exogenous}---arriving from outside the agent's control flow.
\end{definition}

This abstraction simultaneously captures three things that simpler models miss: (i) the agent's internal control flow ($\mathtt{thk} \to \mathtt{act} \to \mathtt{obs}$), (ii) the exogenous user channel ($\mathtt{inj}$), and (iii) the temporal interleaving between them. It is \emph{not} a relabeling of ReAct traces: ReAct assumes user input is fully committed at $t = 0$, whereas our stream treats user input as \emph{progressively revealed} through $\mathtt{inj}$ events whose timing is controlled by the user.

\subsection{State Decomposition}

At any time $t$, the agent's state is the pair $(\Estate_t, \Wstate_t)$:

\begin{definition}[State Pair]~
\begin{itemize}[leftmargin=*,itemsep=1pt]
    \item \textbf{Epistemic state} $\Estate_t$: the agent's context, beliefs, and accumulated information. \emph{Always rollbackable} via context truncation.
    \item \textbf{World state} $\Wstate_t$: changes to the external environment caused by the agent's actions. \emph{Partially rollbackable}, depending on the action.
\end{itemize}
\end{definition}

The two layers have fundamentally different rollback semantics. Epistemic state is data; it can always be truncated. World state is the critical layer: its rollbackability depends on the algebraic structure of the actions that produced it. (Tokens and wall-clock time consumed during execution are sunk costs that cannot be recovered; we do not model them as part of the agent's state since they play no role in rollback decisions.)

\subsection{The Reversibility Taxonomy}

\begin{definition}[Reversibility Classes]
\label{def:reversibility}
For each action $a$ in the agent's action space $\mathcal{A}$, we assign a \emph{reversibility class} based on the algebraic structure of its effect on world state $W$:
\begin{itemize}[leftmargin=*,itemsep=1pt]
    \item \textbf{Idempotent ($I$)}: These actions do not modify world state $W$. Formally, $\mathrm{exec}(a) \circ \mathrm{exec}(a) \equiv \mathrm{exec}(a)$. Since $W$ is unchanged, no rollback is ever needed. \emph{Examples}: read file, query database, fetch web page.
    \item \textbf{Reversible ($R$)}: $\exists\, a^{-1}$ such that $\mathrm{exec}(a^{-1}) \circ \mathrm{exec}(a) \equiv \mathrm{id}$. Effect is exactly undoable. \emph{Examples}: create file $\leftrightarrow$ delete file; set flag $\leftrightarrow$ unset flag.
    \item \textbf{Compensable ($K$)}: No exact inverse exists, but $\exists\, \comp(a)$ such that the composition leaves $W$ in a state acceptable under the new specification. \emph{Examples}: sent email $\to$ send correction; booked ticket $\to$ cancel with fee.
    \item \textbf{Irreversible ($X$)}: No compensation exists or its cost is prohibitive. \emph{Examples}: settled financial transaction; deleted unique data without backup.
\end{itemize}
\end{definition}

This taxonomy mirrors the \emph{saga pattern} in distributed database theory~\citep{garcia1987sagas} and \emph{compensation semantics} in long-lived transactions~\citep{elmagarmid1992transaction}. What is new is its application to LLM agent action spaces and the claim that \emph{this classification is the correct invariant for reasoning about agent adaptability}.

\begin{definition}[Reversibility Ratio]
\label{def:rho}
The \emph{reversibility ratio} of a task $T$ is:
\begin{equation}
\rho(T) = \frac{|\{a \in \mathcal{A}_T : \mathrm{class}(a) \in \{I, R\}\}|}{|\mathcal{A}_T|}
\end{equation}
where $\mathcal{A}_T$ is the set of tools available for task $T$.
\end{definition}

The ratio $\rho(T) \in [0, 1]$ is a \emph{structural property} of the task, determined by its tool configuration and known before execution begins. Tasks with low $\rho$ are inherently harder to adapt to user revisions, regardless of the algorithm used---a claim we formalize in \S\ref{sec:theory}.

\section{The Revision Decision Problem}
\label{sec:revision}

With the stream paradigm and the reversibility taxonomy in place, we can now precisely formulate what happens when a user injection arrives.

\subsection{The Decision Triple}

When an injection $\mathtt{inj}(\phi, t)$ arrives at step $n$ of the stream, the agent faces a decision. Given the current trace $\tau = (\varepsilon_1, \ldots, \varepsilon_n)$, state $(\Estate_n, \Wstate_n)$, the \emph{current specification} $\spec_0$ (which includes the initial query and all previously absorbed injections), and the new revision $\phi$, the agent must choose a triple $(k^*, \rho, \pi')$:

\begin{itemize}[leftmargin=*,itemsep=1pt]
    \item $k^* \in \{0, 1, \ldots, n\}$: the \emph{epistemic rollback point}---context is truncated to step $k^*$.
    \item $\rho$: a \emph{compensation program}---a sequence of actions applied to $W$ to undo or mitigate effects of $\tau_{k^*+1:n}$.
    \item $\pi'$: a \emph{continuation policy} from $(\Estate_{k^*}, \Wstate_{\text{post-comp}})$ under the updated spec $\spec_0 \cup \{\phi\}$.
\end{itemize}

\begin{definition}[Adaptation Cost]
\label{def:adaptcost}
\begin{equation}
\adapt(k^*, \rho, \pi') = \underbrace{C_{\mathrm{comp}}(\rho)}_{\text{compensation}} + \underbrace{C_{\mathrm{waste}}(\tau_{k^*+1:n})}_{\text{abandoned work}}
\end{equation}
where $C_{\mathrm{comp}}$ is the cost of compensation actions and $C_{\mathrm{waste}}$ is the sunk cost of discarded steps. Both terms are measured in abstract cost units; our theoretical results hold for any additive cost measure.
\end{definition}

\subsection{Revision Types}
We classify revisions into three core categories based on their effect on the specification: \textbf{Additive} ($\phi^+$, spec is extended), \textbf{Restrictive} ($\phi^-$, spec is narrowed), and \textbf{Substitutive} ($\phi^*$, a clause is replaced). In experiments (\S\ref{sec:experiments}) we additionally evaluate two practical types: \textbf{Cancellation} (a requirement is revoked) and \textbf{Priority shift} (execution order is changed).

\section{What Bounds an Agent's Flexibility?}
\label{sec:theory}

The stream paradigm and the formalization in the preceding sections equip us with the vocabulary to ask a precise question: \emph{when an injection $\phi$ arrives mid-execution, what determines how costly it is for the agent to absorb it?}

One might expect the answer to lie in algorithmic sophistication---a better planner, a more capable LLM, or a longer context window. But none of these address the core difficulty. A better planner cannot unsend an email that has already been delivered. A more capable LLM cannot reverse a payment that has already settled. A longer context window can always be truncated, so epistemic state is never the bottleneck. The real constraint lies elsewhere: in \emph{what the agent has already done to the world, and whether those effects can be undone}.

An agent whose trace consists entirely of $I/R$-class actions can absorb any injection at near-zero cost---there is nothing irreversible to undo. An agent that has executed $K$- or $X$-class actions faces unavoidable compensation costs or outright impossibility. The binding constraint is not the algorithm---it is the \emph{reversibility structure of the action space}. We now state this formally.

\subsection{The Irreversibility Cost Principle}

\begin{proposition}[Irreversibility Cost Principle]
\label{prop:irreversibility-cost}
Let $\tau_{1:n}$ be an agent's execution trace and let $\phi$ be an injection arriving at step $n$. For any action $a_i \in \tau_{1:n}$ that is \emph{incompatible} with $\spec_0 \cup \{\phi\}$:
\begin{enumerate}[leftmargin=*,itemsep=1pt]
    \item If $\mathrm{class}(a_i) = I$: $a_i$ did not modify $W$; no undo is needed. Cost: \emph{zero}.
    \item If $\mathrm{class}(a_i) = R$: $a_i$ can be exactly undone by executing $a_i^{-1}$. Cost: $C(a_i^{-1})$, \emph{typically small}.
    \item If $\mathrm{class}(a_i) = K$: the agent must execute $\comp(a_i)$. Cost $C_{\mathrm{comp}}(\comp(a_i)) > 0$. \emph{This cost is unavoidable}: no algorithm can avoid it once $a_i$ has been executed.
    \item If $\mathrm{class}(a_i) = X$: no compensation exists. Full satisfaction of $\spec_0 \cup \{\phi\}$ is \emph{impossible}.
\end{enumerate}
\end{proposition}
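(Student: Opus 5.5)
The argument will be a case analysis on $\mathrm{class}(a_i)$, reading each clause straight off Definition~\ref{def:reversibility} and the cost decomposition in Definition~\ref{def:adaptcost}. Before the cases I will make ``incompatible'' precise: $a_i$ is incompatible with $\spec_0 \cup \{\phi\}$ if the world state $\Wstate_n$, restricted to the component that $a_i$ wrote, is not acceptable under the updated specification. Any admissible decision triple $(k^*, \rho, \pi')$ whose final outcome satisfies $\spec_0 \cup \{\phi\}$ must therefore change that component of $\Wstate$.

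A preliminary observation does most of the work. Rolling back the epistemic state to $k^*$ only truncates context, so it costs nothing in $\Wstate$ and cannot change $\Wstate$. Any change to the world must therefore come from actions in the compensation program $\rho$ or in the continuation $\pi'$. Since $C_{\mathrm{comp}}$ is additive, the cost the agent pays for neutralizing $a_i$ is bounded below by the cost of whatever actions in $\rho \cup \pi'$ undo or mitigate $a_i$'s effect.

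The four cases then follow.
\begin{itemize}[leftmargin=*,itemsep=1pt]
    \item \textbf{Class $I$.} By definition $a_i$ leaves $\Wstate$ unchanged. Its only trace is epistemic, which truncation removes, so taking $\rho = \emptyset$ for this action costs zero.
    \item \textbf{Class $R$.} Appending $a_i^{-1}$ to $\rho$ gives $\mathrm{exec}(a_i^{-1}) \circ \mathrm{exec}(a_i) \equiv \mathrm{id}$ on the affected component. The cost is exactly $C(a_i^{-1})$. ``Typically small'' is an empirical remark, not something to prove.
    \item \textbf{Class $K$.} No exact inverse exists, so no action sequence returns $\Wstate$ to its pre-$a_i$ state. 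Acceptability under the new specification can only be reached through a compensating sequence, and by definition $\comp(a_i)$ is such a sequence with $C_{\mathrm{comp}} > 0$. Unavoidability is the claim that any algorithm, whatever its choice of $k^*$ and $\pi'$, must include such a sequence, because the world change has already happened and epistemic rollback cannot touch it.
    \item \textbf{Class $X$.} No compensation exists, meaning no finite action sequence maps $\Wstate_n$ to a state acceptable under $\spec_0 \cup \{\phi\}$. Hence no choice of $\rho$ or $\pi'$ yields full satisfaction. If $X$ is instead read as ``compensation cost prohibitive,'' I will state the conclusion relative to a cost budget: full satisfaction is impossible within any finite admissible budget.
\end{itemize}

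The main obstacle is the $K$ case, in particular making ``no algorithm can avoid it'' rigorous. As defined, $\comp(a_i)$ is one specific program, and some other cheaper sequence might also restore acceptability. My plan is to define $C_{\mathrm{comp}}(\comp(a_i))$ as the infimum cost over all compensating sequences. The claim then becomes that this infimum is strictly positive. Positivity follows from two facts: the empty sequence fails, because $a_i$ is incompatible and has no exact inverse, and every nonempty world-modifying action has positive cost.

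I would state these two facts explicitly as modeling assumptions. The proposition is then essentially definitional given those assumptions.
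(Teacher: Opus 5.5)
Your proposal is correct and takes the paper's route: a case-by-case reading of Definition~\ref{def:reversibility}. Where the paper simply asserts that compensation cost is positive ``by definition'', your two explicit modeling assumptions (the empty program fails by incompatibility, and world-modifying actions have positive cost) supply what that assertion actually needs. One small tightening: for an infimum over all compensating sequences to be strictly positive you need a uniform positive lower bound on action costs, not just positivity of each individual action's cost; this holds, for instance, when the finitely many tools in $\mathcal{A}_T$ each carry a fixed positive cost.
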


\begin{proof}
Each case follows directly from the definitions of the reversibility classes (Definition~\ref{def:reversibility}). Case~1: $I$-class actions do not modify $W$; nothing to undo. Case~2: $R$-class actions have an exact inverse. Case~3: $K$-class actions have no exact inverse; compensation has nonzero cost by definition. Case~4: $X$-class actions have no compensation; $W$ permanently contains the incompatible effect.
\end{proof}

Proposition~\ref{prop:irreversibility-cost} is the formal expression of our central thesis: \emph{an agent's flexibility is bounded by its reversibility}. The unavoidable costs in Cases 3 and 4 are not limitations of any particular algorithm---they are properties of the \emph{action space itself}. Improving the planner, the prompt, or the underlying LLM cannot reduce them. The only way to reduce these costs is to change the tools: adding compensation endpoints to APIs, designing actions to be reversible by default, or restructuring workflows to defer irreversible steps.

\begin{corollary}[Reversibility Ratio and Adaptation Difficulty]
\label{cor:rho-difficulty}
For a task $T$ with reversibility ratio $\rho(T)$, the expected number of unavoidable-cost actions in a trace of length $n$ is at least $(1 - \rho(T)) \cdot n$. Tasks with lower $\rho(T)$ expose the agent to more unavoidable costs per injection.
\end{corollary}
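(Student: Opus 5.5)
The corollary is an expectation bound, so the plan is to set up a sampling model for the trace, apply linearity of expectation, and invoke Proposition~\ref{prop:irreversibility-cost} pointwise. The definition of $\rho(T)$ counts \emph{tools}, not executed actions, so the first step is to make explicit the modelling assumption needed to connect them. Concretely, each executed action $a_i$ in $\tau_{1:n}$ is treated as a random draw from $\mathcal{A}_T$. We assume the draw is no more biased toward $I/R$-class tools than uniform, i.e.\ $\Pr[\mathrm{class}(a_i)\in\{K,X\}] \ge 1-\rho(T)$ for each $i$. The uniform case gives equality, and it is the natural reading of the statement.

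Next, we call $a_i$ an \emph{unavoidable-cost action} if $\mathrm{class}(a_i)\in\{K,X\}$. The justification comes from Cases~3 and~4 of Proposition~\ref{prop:irreversibility-cost}. Whenever such an action is incompatible with $\spec_0\cup\{\phi\}$, it forces either a strictly positive compensation cost that no algorithm can avoid, or the impossibility of full satisfaction. By contrast, Cases~1 and~2 show that $I$- and $R$-class actions incur either zero cost or only the cost of an exact inverse. This is why the classes counted in the numerator of $\rho(T)$ are exactly the ones that are not unavoidable-cost.

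Writing $Z=\sum_{i=1}^n \mathbf{1}[\mathrm{class}(a_i)\in\{K,X\}]$, linearity of expectation gives
\[
\mathbb{E}[Z]=\sum_{i=1}^n \Pr[\mathrm{class}(a_i)\in\{K,X\}]\ \ge\ (1-\rho(T))\,n .
\]
This holds with no independence assumption between steps. The second sentence of the corollary then follows from two facts: the bound is monotone decreasing in $\rho(T)$, and Proposition~\ref{prop:irreversibility-cost} attaches a positive cost or an infeasibility to each such action that conflicts with the injection.

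The main obstacle is the gap between tool counts and the action-usage distribution. A real agent may invoke read-only tools far more often than irreversible ones, in which case the bound fails without an explicit assumption. I would handle this by stating the uniform, or ``no favourable bias'', usage hypothesis as part of the corollary's interpretation. An alternative is to restate $\rho$ as a usage-weighted ratio, under which the bound becomes an identity. A secondary subtlety is that ``unavoidable-cost'' really means a \emph{potential} unavoidable cost, realized only upon conflict with $\phi$. I would make this explicit, noting that under a conflict probability $p$ the expected realized count is $p(1-\rho(T))n$, which still decreases in $\rho(T)$.
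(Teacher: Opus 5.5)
The paper gives no proof of this corollary. It is stated directly after Proposition~\ref{prop:irreversibility-cost} and left as evident from Definition~\ref{def:rho}, silently treating the tool fraction $1-\rho(T)$ as a per-step usage frequency and every $K/X$ action as cost-bearing. Your argument is exactly the reasoning that would have to sit underneath it, and it is correct under the two hypotheses you surface. The first is the per-step marginal bound $\Pr[\mathrm{class}(a_i)\in\{K,X\}]\ge 1-\rho(T)$; as you say, linearity of expectation then needs no independence. The second is reading ``unavoidable-cost'' as ``$K/X$-class'' rather than ``incompatible $K/X$-class,'' which is all Proposition~\ref{prop:irreversibility-cost} actually charges. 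What your version buys is that these hypotheses are visibly load-bearing: without them the literal statement is false, and the paper's own benchmark is a counterexample. By StreamBench's design, the first $K$-class action sits at step~9 after eight $I/R$ steps, and the injection fires right after it. So at injection time the trace of length $n=9$ contains exactly one (conflicting) $K/X$ action, whereas the bound at $\rho=0.25$ demands $0.75\cdot 9=6.75$.

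One step needs tightening: the second sentence of the corollary does not follow from monotonicity of a lower bound. Under your ``no favourable bias'' inequality, a lower-$\rho$ task has a larger guaranteed minimum. Its actual expected count can still be smaller than that of a higher-$\rho$ task whose agent leans heavily on $K/X$ tools. To order tasks by exposure you need one of the following: the equality (uniform-usage) case; a usage-weighted $\rho$, as you suggest; or a reading of that sentence as a claim about worst-case exposure only.
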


\subsection{Structural Optimality of Earliest-Conflict Rollback}

Given that some costs are unavoidable, can we at least minimize the \emph{avoidable} costs? We show that a simple rule achieves this.

\begin{assumption}[Compatibility Separability]
\label{asm:compat}
$\tau_{1:k}$ is compatible with $\spec_0 \cup \{\phi\}$ if and only if every $K$- or $X$-class action in $\tau_{1:k}$ remains valid under the new spec.
\end{assumption}
\begin{assumption}[Monotone Compensation]
\label{asm:mono-comp}
$C_{\mathrm{comp}}(\tau_{k+1:n})$ is non-decreasing as $k$ decreases.
\end{assumption}
\begin{assumption}[Monotone Waste]
\label{asm:mono-waste}
$C_{\mathrm{waste}}(\tau_{k+1:n})$ is non-decreasing as $k$ decreases.
\end{assumption}

\begin{theorem}[Structural Optimality]
\label{thm:structural-opt}
Under Assumptions~\ref{asm:compat}--\ref{asm:mono-waste}, define $i_{\mathrm{bad}}$ as the index of the earliest $K$- or $X$-class action in $\tau$ that is incompatible with $\spec_0 \cup \{\phi\}$. Then $k^* = i_{\mathrm{bad}} - 1$ minimizes $C_{\mathrm{comp}}(\tau_{k^*+1:n}) + C_{\mathrm{waste}}(\tau_{k^*+1:n})$ over all feasible $k$.
\end{theorem}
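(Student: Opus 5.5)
The argument has two parts: first pin down the feasible set of rollback points, then use the monotonicity assumptions to show that the largest feasible point is optimal. I take ``feasible'' to mean that after truncating to $k$, the retained prefix $\tau_{1:k}$ is compatible with $\spec_0 \cup \{\phi\}$; otherwise the continuation policy $\pi'$ would start from a world state that already violates the new specification.

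By Assumption~\ref{asm:compat}, $\tau_{1:k}$ is compatible exactly when every $K$- or $X$-class action among its first $k$ steps remains valid under the new spec. $I$- and $R$-class actions never block compatibility; for $I$ this is also consistent with Proposition~\ref{prop:irreversibility-cost}, Case~1. The feasible set is therefore $\{k : k < i_{\mathrm{bad}}\} = \{0, 1, \ldots, i_{\mathrm{bad}}-1\}$, which is downward closed and has maximum element $i_{\mathrm{bad}}-1$.

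If no incompatible $K/X$ action exists, I set $i_{\mathrm{bad}} = n+1$ by convention. Then $k^* = n$, nothing is rolled back, and both cost terms vanish for an empty suffix, so this degenerate case is immediate.

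For optimality, take any feasible $k \le i_{\mathrm{bad}} - 1 = k^*$. Assumption~\ref{asm:mono-comp} gives $C_{\mathrm{comp}}(\tau_{k+1:n}) \ge C_{\mathrm{comp}}(\tau_{k^*+1:n})$. Assumption~\ref{asm:mono-waste} gives the analogous inequality for $C_{\mathrm{waste}}$. Adding the two inequalities shows the total cost at $k$ is at least the total cost at $k^*$. Since $k^*$ is itself feasible, it attains the minimum. The additivity from Definition~\ref{def:adaptcost} is exactly what lets the two separate bounds combine into a bound on the sum.

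The main obstacle is conceptual rather than computational: stating the feasibility notion precisely enough that Assumption~\ref{asm:compat} yields the downward-closed feasible set. In particular I need that compatibility of a prefix depends only on the $K/X$ actions it contains. Under this reading, reaching $k$ is not blocked by any $R$-class actions in $\tau_{k+1:n}$, since their rollback cost is simply part of $C_{\mathrm{comp}}$. I would also remark that the unavoidable compensation for $a_{i_{\mathrm{bad}}}$ and any later incompatible $K$ actions (Proposition~\ref{prop:irreversibility-cost}, Case~3) is included in every feasible choice. Hence the theorem's minimization concerns only the avoidable part of the cost, which matches the informal claim that preceded it.
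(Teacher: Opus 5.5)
Your proof is correct and follows the paper's argument. Assumption~\ref{asm:compat} pins the feasible set to $\{0,\ldots,i_{\mathrm{bad}}-1\}$, and the two monotonicity assumptions then make its largest element a minimizer of the summed cost. Your weak inequalities are the right reading: the paper's proof claims the costs are strictly larger for $k' < i_{\mathrm{bad}}-1$ and that $k^*$ is the unique minimizer, which does not follow from merely non-decreasing costs, whereas the statement only asserts minimality, which is exactly what you prove.
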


\begin{proof}
The feasible set is $\mathcal{K} = \{0, 1, \ldots, i_{\mathrm{bad}} - 1\}$ (by Assumption~\ref{asm:compat}). For $k' < i_{\mathrm{bad}} - 1$, both cost terms are strictly larger (Assumptions~\ref{asm:mono-comp}--\ref{asm:mono-waste}). For $k' > i_{\mathrm{bad}} - 1$, $k' \notin \mathcal{K}$. Hence $k^* = i_{\mathrm{bad}} - 1$ uniquely minimizes the objective.
\end{proof}

\begin{remark}
The proof is short because the result is structural: monotonicity of costs combined with a compatibility constraint admits a unique minimizer. This directness means the optimal rollback strategy can be implemented as a simple scan without any cost estimation or search, as we show next.
\end{remark}

\section{The Revision Absorber Algorithm}
\label{sec:algorithm}

We now translate the theoretical results into a concrete algorithm. Proposition~\ref{prop:irreversibility-cost} tells us \emph{what} determines adaptation cost; Theorem~\ref{thm:structural-opt} tells us \emph{where} to roll back. What remains is to assemble these into a working system. Figure~\ref{fig:overview} previews the contrast between the transactional status quo and the stream paradigm at the level of a single execution trace annotated with the $\{I,R,K,X\}$ taxonomy (Def.~\ref{def:reversibility}); the algorithm realising panel~(b) is stated next.

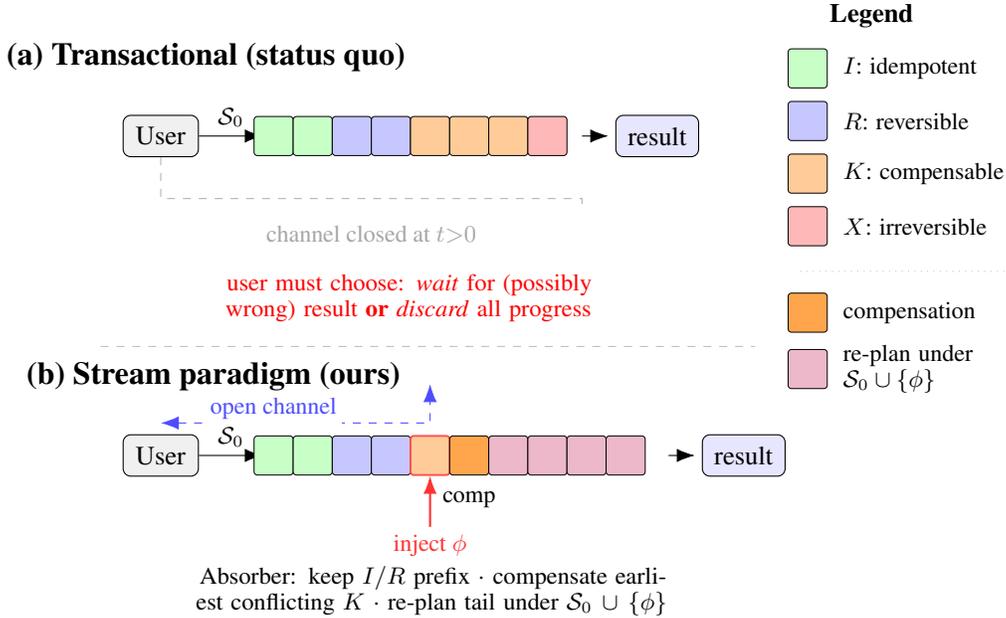
\begin{figure}[!htbp]
\centering
\begin{tikzpicture}[
  font=\normalsize,
  >={Latex[length=2.4mm]},
  actbox/.style={draw, rectangle, minimum width=5.2mm, minimum height=5.2mm, inner sep=0pt, rounded corners=0.45mm, line width=0.25pt},
  Icol/.style={fill=green!22},
  Rcol/.style={fill=blue!22},
  Kcol/.style={fill=orange!40},
  Xcol/.style={fill=red!28},
  Ncol/.style={fill=purple!28},
  Ccol/.style={fill=orange!70},
  userb/.style={draw, rounded corners=1mm, fill=gray!12, minimum width=10mm, minimum height=5.5mm, inner sep=2pt},
  resb/.style={draw, rounded corners=1mm, fill=blue!10, minimum width=11mm, minimum height=5.5mm, inner sep=2pt}
]

\node[font=\bfseries\large] at (0.9, 2.55) {(a)~Transactional (status quo)};

\node[userb] (uA) at (0.3, 1.5) {User};
\draw[->] (uA.east) -- node[above, font=\small]{$\mathcal{S}_0$} ++(0.85,0);
\foreach \x/\st in {0/Icol,1/Icol,2/Rcol,3/Rcol,4/Kcol,5/Kcol,6/Kcol,7/Xcol}{
  \node[actbox, \st] at (1.80+\x*0.52, 1.5) {};
}
\draw[->] (5.90, 1.5) -- ++(0.35, 0);
\node[resb] (rA) at (6.90, 1.5) {result};

\draw[dashed, gray!60] (uA.south) -- ++(0, -0.55) -| (5.9, 0.55);
\node[font=\small, gray!70] at (3.1, 0.2) {channel closed at $t{>}0$};

\node[font=\small, red, text width=6.5cm, align=center] at (3.6, -0.65)
  {user must choose: \textit{wait} for (possibly wrong) result \textbf{or} \textit{discard} all progress};

\draw[dashed, gray!50] (-0.5, -1.3) -- (8.2, -1.3);

\node[font=\bfseries\large] at (1.0, -1.7) {(b)~Stream paradigm (ours)};

\node[userb] (uB) at (0.3, -2.75) {User};
\draw[->] (uB.east) -- node[above, font=\small]{$\mathcal{S}_0$} ++(0.85,0);
\foreach \x/\st in {0/Icol,1/Icol,2/Rcol,3/Rcol}{
  \node[actbox, \st] at (1.80+\x*0.52, -2.75) {};
}
\node[actbox, Kcol, draw=red!70, line width=0.7pt] (kbad) at (3.88, -2.75) {};
\draw[->, red!80, thick] (3.88, -3.70) -- (3.88, -3.02);
\node[font=\small, red!80] at (3.88, -3.95) {inject $\phi$};
\node[actbox, Ccol] at (4.40, -2.75) {};
\node[font=\small, anchor=north] at (4.40, -3.10) {comp};
\foreach \x/\st in {0/Ncol,1/Ncol,2/Ncol,3/Ncol}{
  \node[actbox, \st] at (4.92+\x*0.52, -2.75) {};
}
\draw[->] (7.05, -2.75) -- ++(0.35, 0);
\node[resb] (rB) at (8.05, -2.75) {result};

\draw[<->, blue!70, dashed] (uB.north) ++(0, 0.15) -| (3.88, -1.80);
\node[font=\small, blue!70, fill=white, inner sep=2pt, rounded corners=0.5pt] at (1.8, -2.12) {open channel};

\node[font=\small, text width=8.2cm, align=center] at (3.9, -4.55)
  {Absorber: keep $I{/}R$ prefix $\cdot$ compensate earliest conflicting $K$ $\cdot$ re-plan tail under $\mathcal{S}_0\cup\{\phi\}$};

\begin{scope}[shift={(8.9, 0.7)}, font=\small]
  \node[font=\bfseries] at (0.85, 2.4) {Legend};
  \node[actbox, Icol] at (0, 1.7) {}; \node[anchor=west] at (0.35, 1.7) {$I$: idempotent};
  \node[actbox, Rcol] at (0, 1.0) {}; \node[anchor=west] at (0.35, 1.0) {$R$: reversible};
  \node[actbox, Kcol] at (0, 0.3) {}; \node[anchor=west] at (0.35, 0.3) {$K$: compensable};
  \node[actbox, Xcol] at (0, -0.4) {}; \node[anchor=west] at (0.35, -0.4) {$X$: irreversible};
  \draw[dotted, gray!50] (-0.1, -1.0) -- (2.8, -1.0);
  \node[actbox, Ccol] at (0, -1.55) {}; \node[anchor=west] at (0.35, -1.55) {compensation};
  \node[actbox, Ncol] at (0, -2.3) {}; \node[anchor=west, text width=2.5cm] at (0.35, -2.3)
    {re-plan under\\ $\mathcal{S}_0\cup\{\phi\}$};
\end{scope}
\end{tikzpicture}
\caption{\textbf{Transactional vs.\ stream execution.} \emph{(a)}~In the transactional model the user--agent channel closes at $t{=}0$; a mid-execution revision $\phi$ forces a binary choice between waiting or discarding. \emph{(b)}~The stream paradigm keeps the channel open; the Revision Absorber preserves the $I/R$-class prefix, compensates the \emph{earliest} conflicting $K$-class action, and re-plans only the post-conflict tail under the updated specification. Tile colours use the reversibility taxonomy of Def.~\ref{def:reversibility}.}
\label{fig:overview}
\end{figure}

The Revision Absorber is guided by three design principles: (1) \textbf{Agent-natural execution}: the agent's main loop is unchanged. (2) \textbf{Reactive activation}: the Absorber is dormant and activates only when an injection arrives. (3) \textbf{Minimum Compensation}: the rollback point minimizes world-state compensation, consistent with compatibility.

\begin{algorithm}[t]
\caption{Revision Absorber with Earliest-Conflict Rollback}
\label{alg:absorber}
\begin{algorithmic}[1]
\REQUIRE Trace $\tau_{1:n}$ with reversibility annotations, state $(\Estate_n, \Wstate_n)$, current spec $\spec_0$, injection $\phi$
\STATE \textbf{// Step 1: Scan for earliest conflict}
\STATE $i_{\mathrm{bad}} \gets n + 1$
\FOR{$i = 1$ to $n$}
    \IF{$\mathrm{class}(a_i) \in \{K, X\}$ \textbf{and} $\neg\, \textsc{IsCompatible}(a_i, \spec_0 \cup \{\phi\})$}
        \STATE $i_{\mathrm{bad}} \gets i$; \textbf{break}
    \ENDIF
\ENDFOR
\STATE $k^* \gets i_{\mathrm{bad}} - 1$
\STATE \textbf{// Step 2: Compensate}
\FOR{$j = k^* + 1$ to $n$}
    \IF{$\mathrm{class}(a_j) = R$} \STATE Execute $a_j^{-1}$
    \ELSIF{$\mathrm{class}(a_j) = K$} \STATE Execute $\comp(a_j)$
    \ELSIF{$\mathrm{class}(a_j) = X$} \STATE \textsc{XFallback}$(a_j, \phi)$
    \ENDIF
\ENDFOR
\STATE $\Wstate' \gets$ current world state after compensation
\STATE \textbf{// Step 3: Continue}
\STATE Truncate context to $\Estate_{k^*}$; $\pi' \gets \textsc{Plan}(\Estate_{k^*}, \Wstate', \spec_0 \cup \{\phi\})$
\RETURN $(\Estate_{k^*}, \Wstate', \pi')$
\end{algorithmic}
\end{algorithm}

$\textsc{IsCompatible}(a, \spec)$ is implemented as a single LLM call. Step~1 requires at most $m$ LLM calls ($m$ = number of $K/X$-class actions, typically $m \ll n$). The non-blocking stream runtime checks the injection queue after each event and activates the Absorber if non-empty. No existing framework (LangGraph, AutoGen, CrewAI) supports this interaction model natively.

\section{Experiments}
\label{sec:experiments}

We validate all theoretical predictions on \textsc{StreamBench}, a controlled benchmark with real LLM agents. Our experimental corpus totals 30,561 runs: 1,008 real-LLM runs on DeepSeek-V3~\citep{deepseek2024deepseekv3} as the primary grid, plus 29,310 deterministic MockLLM runs for large-scale ablations. We additionally verify cross-LLM consistency on Claude Haiku 4.5~\citep{anthropic2024claude} ($n{=}300$) and GPT-4o-mini ($n{=}224$). Full experimental details, per-scenario breakdowns, ablation analyses, and statistical tests are in the Appendix.

\paragraph{Benchmark.} \textsc{StreamBench} comprises three multi-step tool-calling scenarios of 12--15 steps each (Event Planning, Travel Arrangement, Report \& Publish), instantiated at four reversibility ratios $\rho \in \{1.0, 0.75, 0.5, 0.25\}$ by varying which $K/X$ tools are enabled. Tools are simulated and annotated with their reversibility class. Each injection directly contradicts the first $K$-class action already executed. Five revision types are tested: additive ($\phi^+$), restrictive ($\phi^-$), substitutive ($\phi^*$), cancellation, and priority shift. The agent is a ReAct-style planner~\citep{yao2023react} using DeepSeek-V3 via function calling. Quality is scored 1--5 by a separate strict LLM-as-judge~\citep{zheng2023judging} that inspects both the agent's declared outcome and its concrete world-state effects (email bodies, booking targets, payment amounts); each run is scored three times and averaged.

\paragraph{Methods.} We evaluate five revision-response policies spanning the full policy space induced by our formalization (\S\ref{sec:revision}): \textbf{Oracle} (informed upper bound: receives $\spec_0 \cup \{\phi\}$ at $t{=}0$; not achievable online), \textbf{Revision Absorber} (ours), \textbf{Full-Restart} (brute-force: compensate everything, restart from scratch), \textbf{Naive Absorber} (no-op: append $\phi$ to spec, continue without rollback), and \textbf{Ignore} (no-op: discard $\phi$ entirely). Oracle and Ignore serve as principled upper and lower bounds; Full-Restart is the brute-force reference. Additional comparisons with systems-inspired patterns (Checkpoint-$K$, LangGraph-Interrupt) are reported in Appendix~\ref{app:crossllm}.

\paragraph{Metrics.} \emph{Quality} (1--5, LLM-judged against $\spec_0\cup\{\phi\}$); \emph{Wasted acts} (pre-injection actions later discarded); \emph{Comp.}\ ($K$ \textsc{compensate} plus $R$ \textsc{invert} calls); \emph{Total steps}; \emph{Tokens~(k)} (summed over agent, judge, and compatibility check). $\rho(T)$ as in Def.~\ref{def:rho}.

\subsection{Main Results}

\begin{table}[t]
\centering\small
\caption{DeepSeek-V3 primary grid ($n{=}1{,}008$, 144 balanced runs per method). \emph{Quality}: 1--5 LLM-judged score assessing output conformance to the revised spec. \emph{Wasted}: acts executed pre-injection then discarded. \emph{Comp.}: $K$-class compensation plus $R$-class inversion calls. The Absorber closes ${\sim}60\%$ of the Oracle--Ignore quality gap at 0.78 wasted acts per run; Full-Restart reaches indistinguishable quality at $14.6\times$ the waste.}
\label{tab:main}
\begin{tabular}{lccccc}
\toprule
\textbf{Method} & \textbf{Quality} & \textbf{Wasted} & \textbf{Comp.} & \textbf{Steps} & \textbf{Tokens (k)} \\
\midrule
Oracle (upper bound) & $3.79 \pm 0.98$ & 0.00 & 0.00 & 18.0 & 77.1 \\
\textbf{Absorber (ours)} & $\mathbf{3.07 \pm 0.86}$ & $\mathbf{0.78}$ & 0.78 & 27.6 & 136.8 \\
Full Restart (brute force) & $3.17 \pm 0.85$ & 11.41 & 3.44 & 30.2 & 116.6 \\
Naive Absorber (lower bound) & $2.78 \pm 1.19$ & 0.00 & 0.00 & 28.2 & 137.0 \\
Ignore (lower bound) & $1.99 \pm 1.27$ & 0.00 & 0.00 & 22.9 & 110.0 \\
\bottomrule
\end{tabular}
\end{table}

Table~\ref{tab:main} shows a clear hierarchy: $\text{Ignore}_{1.99} < \text{Naive}_{2.78} < \text{\textbf{Absorber}}_{3.07} \approx \text{Full-Restart}_{3.17} < \text{Oracle}_{3.79}$. The Absorber matches Full-Restart on quality while saving $14.6\times$ on waste; full pairwise bootstrap CIs and Cohen's $d$ effect sizes are in Table~\ref{tab:stats_body}. Figure~\ref{fig:pareto} plots the resulting quality--waste Pareto frontier.

\begin{figure}[!htbp]
\centering
\begin{minipage}[c]{0.46\linewidth}
  \centering
  \includegraphics[width=\linewidth]{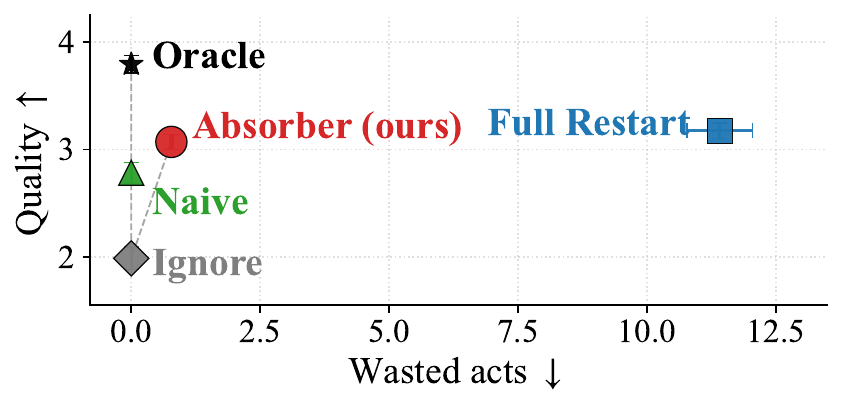}
  \captionof{figure}{Quality--waste Pareto frontier on DeepSeek-V3 ($n{=}1{,}008$). The Absorber attains near-Oracle quality at an order of magnitude less waste than Full-Restart. Error bars: SEM.}
  \label{fig:pareto}
\end{minipage}\hfill
\begin{minipage}[c]{0.51\linewidth}
  \centering
  \footnotesize
  \renewcommand{\arraystretch}{1.12}
  \setlength{\tabcolsep}{4pt}
  \captionof{table}{Pairwise statistical tests (Absorber vs.\ principled bounds; combined real-LLM corpus, $n{=}1{,}161$). $\bar\Delta{=}\bar{a}{-}\bar{b}$; Cohen's $d$ is the standardized effect size; CIs are bootstrap 2{,}000-resample 95\%.}
  \label{tab:stats_body}
  \begin{tabular}{@{}llrrl@{}}
  \toprule
  \textbf{vs.} & \textbf{Metric} & $\bar\Delta$ & $d$ & 95\% CI \\
  \midrule
  Oracle      & quality & $-0.75$ & $-0.77$ & $[-0.95,\,-0.54]$ \\
  Oracle      & waste   & $+0.74$ & $+2.14$ & $[+0.69,\,+0.80]$ \\
  Full-Rest.  & quality & $-0.11$ & $-0.12$ & $[-0.27,\,+0.05]$ \\
  Full-Rest.  & waste   & $-10.00$ & $-2.05$ & $[-10.89,\,-9.15]$ \\
  Naive       & quality & $+0.33$ & $+0.30$ & $[+0.13,\,+0.52]$ \\
  Ignore      & quality & $+1.38$ & $+1.32$ & $[+1.19,\,+1.57]$ \\
  \bottomrule
  \end{tabular}
\end{minipage}
\end{figure}

The quality CI vs.\ Full-Restart crosses zero ($|d|{<}0.2$, negligible) while the waste CI does not ($d{=}{-}2.05$, very large); the quality gains vs.\ Naive ($d{=}{+}0.30$) and Ignore ($d{=}{+}1.32$) confirm the rollback--compensate behavior is responsible for the improvement, not better prompting.

\subsection{Revision-Type Sensitivity}
\label{subsec:revtype_body}

Table~\ref{tab:revtype_body} decomposes Table~\ref{tab:main} by revision type. The Absorber holds $Q{\in}[2.72,\,3.60]$ with waste $0.7$--$0.9$ across all types; Ignore collapses on restrictive/substitutive/priority-shift because its world state contradicts the revised spec; Full-Restart's waste balloons to $9.7$--$15.5$ since it discards the entire trace. Figure~\ref{fig:per_revision_body} extends the view to Claude Haiku 4.5 and GPT-4o-mini (analysis in App.~\ref{app:crossllm}).

\begin{table}[t]
\centering\footnotesize
\caption{Per-(method, revision-type) quality~/~wasted-acts on DeepSeek-V3. Each cell is $Q$\,/\,wasted.}
\label{tab:revtype_body}
\begin{tabular}{lccccc}
\toprule
\textbf{Method} & \textbf{additive} & \textbf{restrictive} & \textbf{substitutive} & \textbf{cancellation} & \textbf{priority\_shift} \\
\midrule
Oracle & 3.94~/~0.0 & 3.33~/~0.0 & 3.68~/~0.0 & 4.94~/~0.0 & 3.50~/~0.0 \\
\textbf{Absorber} & \textbf{3.60}~/~\textbf{0.8} & \textbf{2.72}~/~\textbf{0.7} & \textbf{2.93}~/~\textbf{0.8} & \textbf{2.94}~/~\textbf{0.9} & \textbf{3.11}~/~\textbf{0.9} \\
Full Restart & 3.34~/~9.7 & 2.93~/~10.4 & 3.08~/~10.8 & 3.46~/~14.1 & 3.22~/~15.5 \\
Naive & 3.63~/~0.0 & 1.92~/~0.0 & 2.72~/~0.0 & 2.80~/~0.0 & 2.93~/~0.0 \\
Ignore & 3.19~/~0.0 & 1.69~/~0.0 & 1.50~/~0.0 & 2.06~/~0.0 & 1.07~/~0.0 \\
\bottomrule
\end{tabular}
\end{table}

\begin{figure}[t]
\centering
\includegraphics[width=\linewidth]{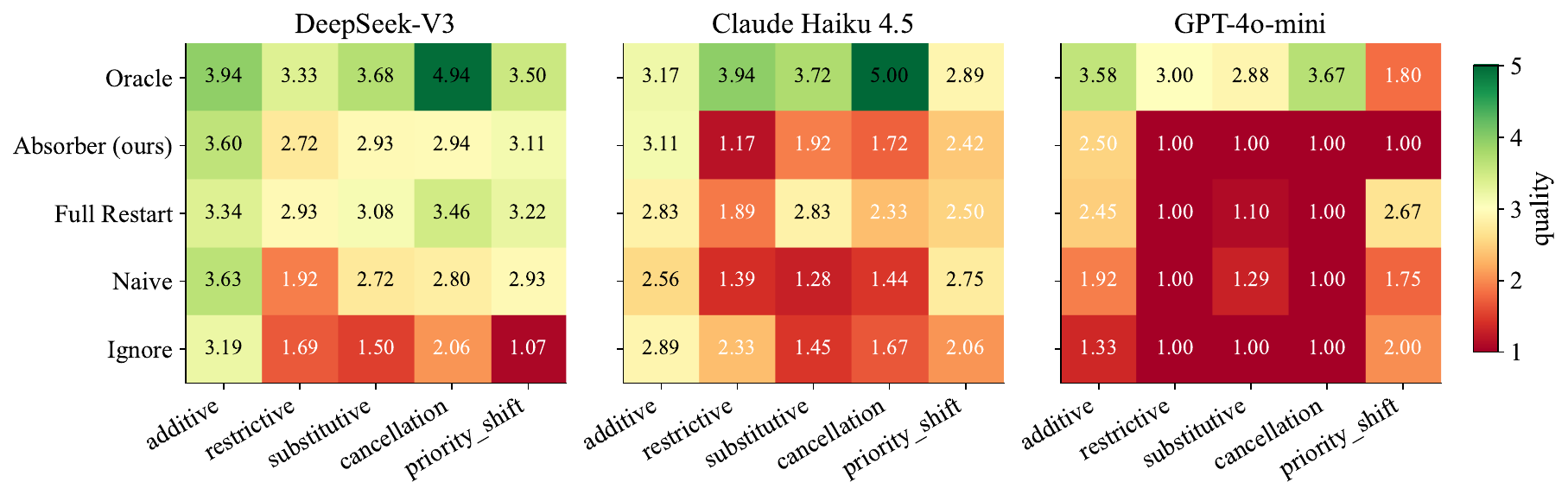}
\caption{Quality heatmap per (method $\times$ revision-type $\times$ LLM). The Absorber stays in the high-quality region across all revision types and all LLM families. Ignore collapses on substitutive and priority-shift revisions, confirming that the judge penalizes spec-violating outputs.}
\label{fig:per_revision_body}
\end{figure}

\subsection{Case Study}
\label{subsec:case}

A single substitutive run (Event Planning, $\rho{=}0.25$, ``indoor dinner $\to$ outdoor BBQ'', Fig.~\ref{fig:case}) shows the mechanism: Absorber keeps 8 $I/R$ steps, compensates 1 $K$ step, re-plans ($Q{=}3.00$, waste $1$); Full-Restart discards 9 pre-injection steps and fails to converge within the step budget ($Q{=}1.67$, waste $17$); Naive and Ignore score $Q{=}1$. Ablations (App.~\ref{app:rho}--\ref{app:multi}) and cross-LLM results (App.~\ref{app:crossllm}) confirm the structural waste footprint is LLM-invariant (Absorber $\sim 1$, Full-Restart $\sim 10$--$14$ across DeepSeek-V3, Claude Haiku 4.5, GPT-4o-mini); quality scales with agent capability, with Absorber beating Naive by $+0.33$ on DeepSeek, $+0.18$ on Haiku, and $+0.03$ on GPT-4o-mini (the Oracle--Ignore spread also compresses, so the ordering is preserved).

\begin{figure}[H]
\centering
\includegraphics[width=\linewidth]{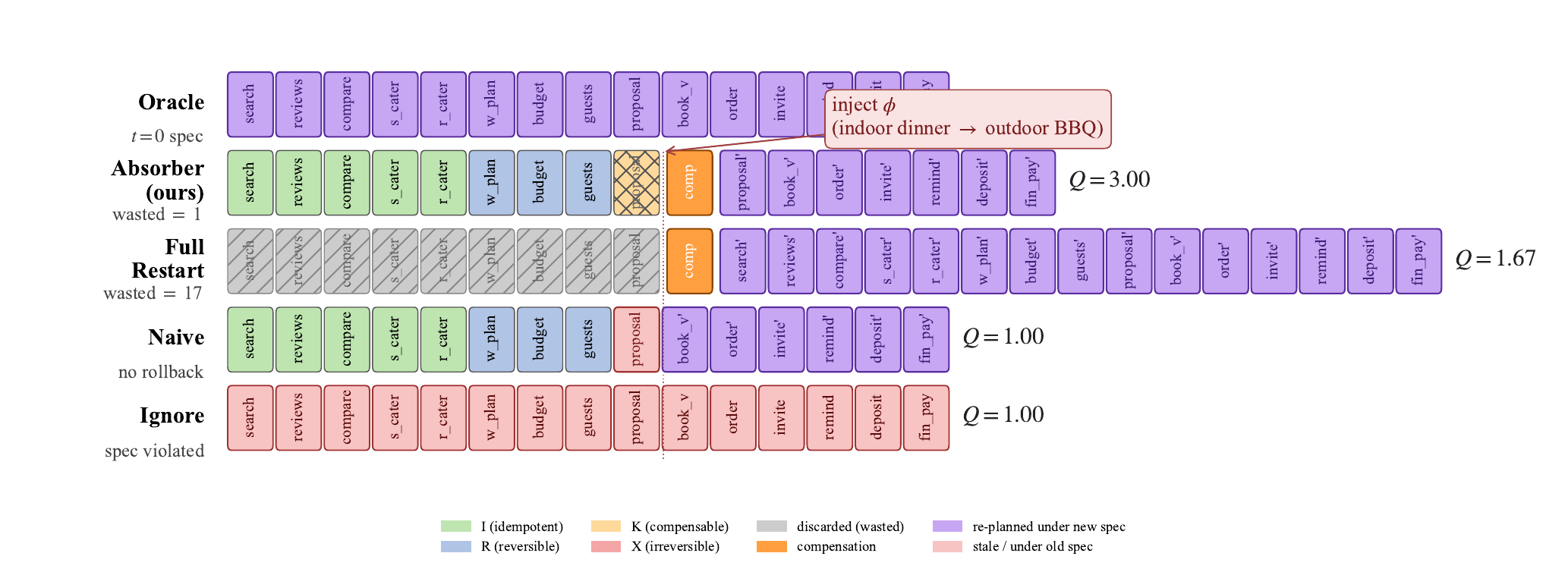}
\caption{Per-step case study (Event Planning, $\rho{=}0.25$, substitutive injection, DeepSeek-V3 seed 0). Absorber keeps 8 $I/R$ steps (green/blue), compensates one $K$ step (orange), re-plans under the new spec (purple). Full-Restart discards everything pre-injection (grey hatch). Naive keeps the pre-injection work but leaves the stale proposal (pink) uncompensated and continues under the new spec, producing an inconsistent world state. Ignore finishes under the old spec.}
\label{fig:case}
\end{figure}

\FloatBarrier
\section{Discussion and Conclusion}
\label{sec:discussion}

The Absorber's advantage is structural work preservation. Limitations: it requires a re-planning-capable agent (weaker LLMs fall below this bar, App.~\ref{app:crossllm}); \textsc{StreamBench} uses simulated tools, and live-API extension is future work. We formalized reversibility as $\{I,R,K,X\}$ and showed the Revision Absorber matches brute-force quality at $14.6\times$ less waste.\label{body_end}

\clearpage
\nocite{*}
\bibliographystyle{plainnat}
\bibliography{references}

\newpage
\appendix

\section{Experimental Setup Details}
\label{app:setup}

\paragraph{StreamBench scenarios.} Each scenario has 12--15 steps with the first $K$-class action at step 9, ensuring 8 $I/R$-class steps of preserved context before the injection point. Event Planning (15 steps): venue/catering search ($I$), plan/budget/guest-list drafts ($R$), send proposal ($K$), book venue/order catering/send invitations/send reminder ($K$), deposit/final payment ($X$). Travel Arrangement (14 steps): flight/hotel search ($I$), visa check ($I$), itinerary/packing-list/expense-report drafts ($R$), send itinerary ($K$), book flight/hotel/notify ($K$), payments ($X$). Report \& Publish (13 steps): reference search/paper reading ($I$), outline/draft/figures/revision ($R$), send to reviewers ($K$), send to editor ($K$), publish/submit ($X$).

\paragraph{Agent and judge.} Agent: DeepSeek-V3 (\texttt{deepseek-chat}) via OpenAI-compatible function calling, temperature $T{=}0.2$. Judge: separate DeepSeek-Chat instance, temperature $T{=}0.0$, scoring each run three times. The judge receives the ground-truth revised specification $\spec_0 \cup \{\phi\}$, the agent's declared narrative outcome, and every public world-state entry with concrete content (email bodies, booking details, payment amounts). It is instructed to penalize any inconsistency between the declared outcome and the actual world-state effects.

\paragraph{Injection protocol.} The runtime fires the injection immediately after the agent executes its first $K$- or $X$-class action. Each injection is designed to directly contradict the content of that action (e.g., the substitutive injection ``change to outdoor BBQ'' contradicts a sent proposal saying ``indoor dinner''), ensuring that the four core methods (Absorber, Full-Restart, Naive, Ignore) branch structurally at that point.

\paragraph{Scale.} The full corpus totals 30,561 runs: 1,008 real-LLM (DeepSeek-V3 primary grid), 300 real-LLM (Claude Haiku 4.5 cross-check), 224 real-LLM (GPT-4o-mini cross-check), and 29,310 MockLLM ablation runs. Total API cost: ${\sim}\$70$.

\section{Discussion of Assumptions}
\label{app:assumptions}

\paragraph{Assumption~\ref{asm:compat} (Compatibility Separability).} This states that trace-prefix compatibility is determined solely by its $K/X$-class actions. It holds when $I/R$-class actions do not constrain future plan feasibility. This is a reasonable approximation for tool-using agents: the LLM can re-derive reasoning from valid context after truncation. In practice, the Absorber's compatibility check is an LLM call, which implicitly handles cases where $I$-class actions do carry relevant information.

\paragraph{Assumptions~\ref{asm:mono-comp} and \ref{asm:mono-waste} (Monotonicity).} These hold by construction: discarding more steps means compensating more actions ($C_{\mathrm{comp}}$ increases) and wasting more prior computation ($C_{\mathrm{waste}}$ increases). The only violation scenario---batch discounts for compensating multiple actions simultaneously---is not observed in our benchmark's tool API design.

\section{$\rho$--Cost Monotonicity}
\label{app:rho}

Proposition~\ref{prop:irreversibility-cost} predicts that tasks with lower $\rho(T)$ (more $K/X$-class tools in the action space) impose higher unavoidable adaptation costs. Table~\ref{tab:rho_app} tests this on the DeepSeek-V3 grid by varying $\rho$ across four levels.

\begin{table}[!htbp]
\centering\small
\caption{Quality and wasted steps by method $\times$ $\rho$ (DeepSeek-V3). \emph{Quality}: LLM-judged 1--5 score. \emph{Wasted}: acts discarded post-injection. Full-Restart's waste grows with $(1-\rho)$ as predicted by Proposition~\ref{prop:irreversibility-cost}; the Absorber maintains near-zero waste. The Absorber's quality gap to Oracle shrinks as $\rho$ increases, confirming that higher-$\rho$ tasks are inherently easier to adapt.}
\label{tab:rho_app}
\begin{tabular}{lcccc|cccc}
\toprule
 & \multicolumn{4}{c}{\textbf{Quality}} & \multicolumn{4}{c}{\textbf{Wasted steps}} \\
\cmidrule(lr){2-5}\cmidrule(lr){6-9}
\textbf{Method} & 1.00 & 0.75 & 0.50 & 0.25 & 1.00 & 0.75 & 0.50 & 0.25 \\
\midrule
Oracle & 3.89 & 3.83 & 3.52 & 3.36 & 0.0 & 0.0 & 0.0 & 0.0 \\
\textbf{Absorber} & 3.70 & 3.06 & 2.84 & 2.92 & 0.0 & 1.0 & 1.0 & 0.9 \\
Full Restart & 3.93 & 3.07 & 2.80 & 2.67 & 4.0 & 10.1 & 15.1 & 12.1 \\
Naive & 3.70 & 2.78 & 2.96 & 2.92 & 0.0 & 0.0 & 0.0 & 0.0 \\
Ignore & 1.96 & 2.47 & 2.15 & 1.93 & 0.0 & 0.0 & 0.0 & 0.0 \\
\bottomrule
\end{tabular}
\end{table}

Two key observations emerge. First, \emph{the Absorber's waste is bounded by 1 across all $\rho$ levels}, while Full-Restart's waste rises sharply once any $K$-class tools are present ($4.0 \to 10.1 \to 15.1$). This confirms Theorem~\ref{thm:structural-opt}: the Earliest-Conflict Rollback rule discards only the post-conflict suffix, regardless of how many tools are available. Second, \emph{the Absorber's quality gap to Oracle narrows at higher $\rho$}: from $-0.87$ at $\rho{=}0.25$ to $-0.19$ at $\rho{=}1.0$. This is expected---when most actions are $I/R$-class, rollback is cheap and the Absorber can recover almost all of Oracle's quality. At low $\rho$, the unavoidable compensation costs (Proposition~\ref{prop:irreversibility-cost}, Case 3) reduce achievable quality for all non-Oracle methods. Full-Restart's quality also drops at low $\rho$ because re-executing from scratch under the step budget does not always converge.

\begin{figure}[!htbp]
\centering
\includegraphics[width=\linewidth]{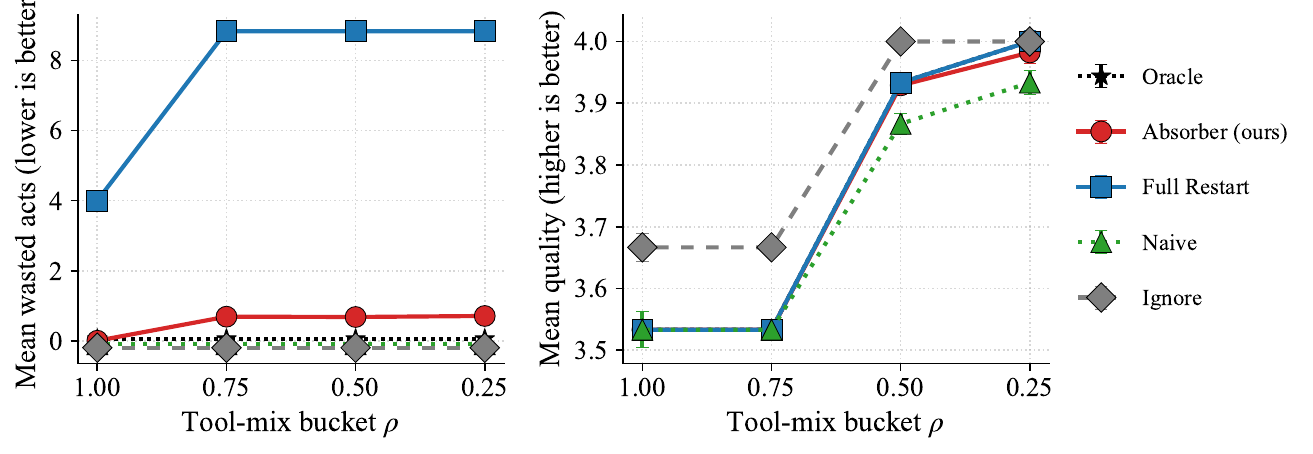}
\caption{Extended $\rho$-sweep on the MockLLM grid (25,200 runs). At $\rho{=}1.0$ (no $K$-class tools), both Absorber and Full-Restart waste is near zero. Once $K$-tools are present ($\rho \leq 0.75$), the Absorber's waste stabilizes at ${\sim}0.7$ while Full-Restart's rises to ${\sim}8.8$.}
\label{fig:rho_sweep}
\end{figure}

\section{Revision Type Sensitivity}
\label{app:revtype}

\begin{table}[!htbp]
\centering\footnotesize
\caption{Per-(method, revision-type) quality~/~wasted-acts on DeepSeek-V3. Each cell shows $Q$ / wasted. \emph{Additive} ($\phi^+$): spec is extended (e.g., ``also invite marketing''). \emph{Restrictive} ($\phi^-$): spec is narrowed (e.g., ``budget must not exceed 5000''). \emph{Substitutive} ($\phi^*$): a clause is replaced (e.g., ``change to outdoor BBQ''). \emph{Cancellation}: a requirement is revoked. \emph{Priority shift}: execution order is changed.}
\label{tab:revtype_app}
\begin{tabular}{lccccc}
\toprule
\textbf{Method} & \textbf{additive} & \textbf{restrictive} & \textbf{substitutive} & \textbf{cancellation} & \textbf{priority\_shift} \\
\midrule
Oracle & 3.94~/~0.0 & 3.33~/~0.0 & 3.68~/~0.0 & 4.94~/~0.0 & 3.50~/~0.0 \\
\textbf{Absorber} & 3.60~/~0.8 & 2.72~/~0.7 & 2.93~/~0.8 & 2.94~/~0.9 & 3.11~/~0.9 \\
Full Restart & 3.34~/~9.7 & 2.93~/~10.4 & 3.08~/~10.8 & 3.46~/~14.1 & 3.22~/~15.5 \\
Naive & 3.63~/~0.0 & 1.92~/~0.0 & 2.72~/~0.0 & 2.80~/~0.0 & 2.93~/~0.0 \\
Ignore & 3.19~/~0.0 & 1.69~/~0.0 & 1.50~/~0.0 & 2.06~/~0.0 & 1.07~/~0.0 \\
\bottomrule
\end{tabular}
\end{table}

Substitutive and priority-shift revisions are the hardest: Ignore collapses to $Q{=}1.07$--$1.50$ because its world state directly contradicts the revised spec. Naive performs better than Ignore on most types because the LLM sees the updated spec in context and can partially self-correct---but on restrictive revisions ($Q{=}1.92$) it fails because the budget constraint is violated by actions already executed in world state. Cancellation is the easiest revision type for all methods because it does not conflict with existing actions (the agent simply sends a follow-up cancellation notice). The Absorber handles all five types with near-constant waste ($0.7$--$0.9$), confirming that the Earliest-Conflict Rollback rule generalizes across revision semantics. The heatmap in the main text (Fig.~\ref{fig:per_revision_body}) visualizes the same structure.

\section{Per-Scenario Breakdown}
\label{app:perscenario}

\begin{table}[!htbp]
\centering\small
\caption{Per-scenario results on DeepSeek-V3 (each cell: $Q$\,/\,wasted-acts). Event Planning has the most $K$-class actions (5 of 15 steps), so Full-Restart pays the highest waste penalty there. The Absorber's waste remains at ${\sim}0.7$ regardless of scenario structure.}
\label{tab:per_scenario_app}
\begin{tabular}{lccccc}
\toprule
\textbf{Scenario} & \textbf{Oracle} & \textbf{Absorber} & \textbf{Full Restart} & \textbf{Naive} & \textbf{Ignore} \\
\midrule
Event Planning & 3.04~/~0.00 & \textbf{2.71}~/~\textbf{0.69} & 2.73~/~11.89 & 2.18~/~0.00 & 2.06~/~0.00 \\
Travel         & 3.96~/~0.00 & \textbf{3.30}~/~\textbf{0.75} & 3.28~/~8.97  & 2.94~/~0.00 & 2.16~/~0.00 \\
Report         & 3.94~/~0.00 & \textbf{3.24}~/~\textbf{0.75} & 3.34~/~9.97  & 3.15~/~0.00 & 2.16~/~0.00 \\
\bottomrule
\end{tabular}
\end{table}

The Absorber's advantage is consistent across all three scenarios, ruling out scenario-specific artifacts. On Travel, the Absorber \emph{exceeds} Full-Restart on quality ($3.30$ vs.\ $3.28$)---preserving the 8 $I/R$-class steps of prior research provides useful context that Full-Restart loses when it restarts from scratch. Event Planning shows the largest Full-Restart waste penalty ($11.89$) because it has the most $K$-class actions downstream of the injection point.

\begin{figure}[!htbp]
\centering
\includegraphics[width=\linewidth]{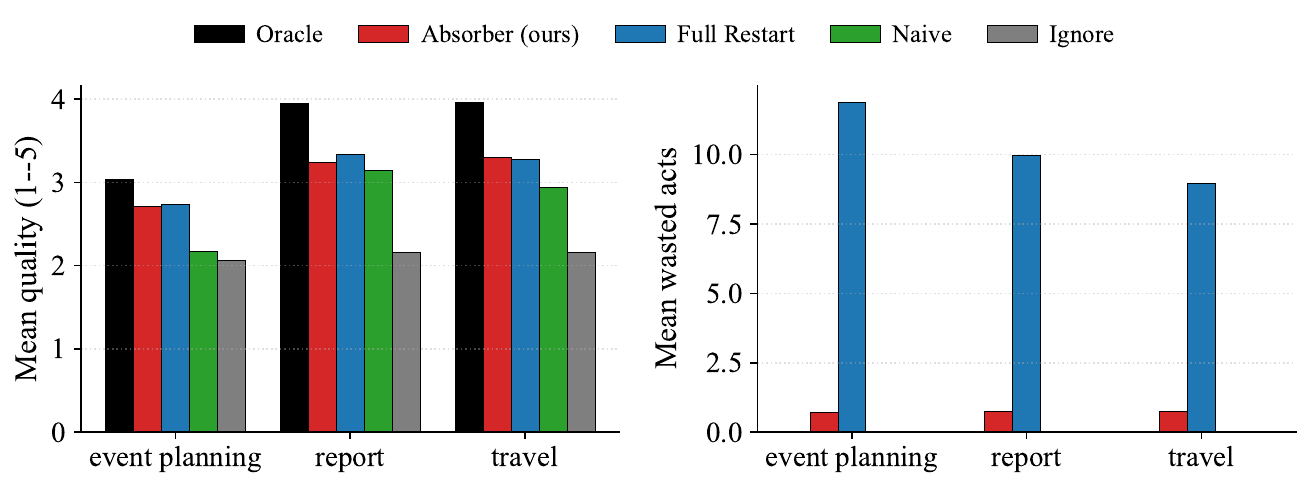}
\caption{Per-scenario quality and wasted acts across all methods (DeepSeek-V3).}
\end{figure}

\section{Injection Timing Ablation}
\label{app:timing}

We hold the injection content fixed and vary its arrival time through four regimes: \emph{early} (before any $K/X$), \emph{mid} (immediately after the first $K/X$), \emph{late} (${\sim}90\%$ through execution), and \emph{very-late} (after all $K/X$ have committed). This ablation uses MockLLM (8,400 runs, 20 seeds) because the measured quantity---rollback depth---depends only on the algorithm, not on LLM capability.

\begin{figure}[!htbp]
\centering
\includegraphics[width=\linewidth]{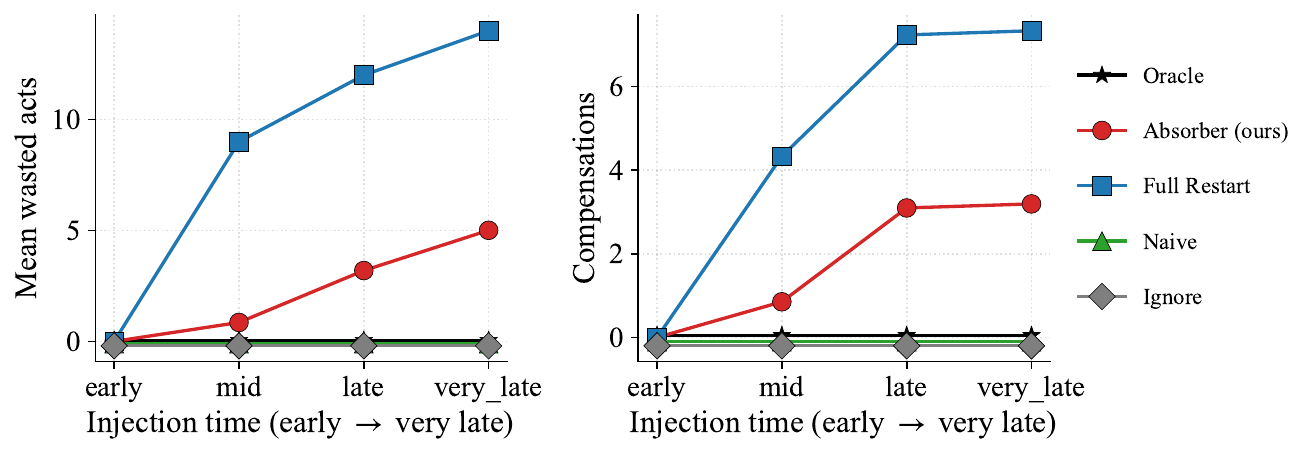}
\caption{\emph{Left}: wasted acts vs.\ injection timing. \emph{Right}: number of compensation actions executed. Both grow monotonically with injection lateness for the Absorber and Full-Restart, as Proposition~\ref{prop:irreversibility-cost} predicts: the more $K/X$-class actions have been committed before the injection, the more compensation is structurally unavoidable.}
\label{fig:timing}
\end{figure}

At the \emph{early} timing (before any $K/X$ action), all methods converge: no irreversible effects exist, so rollback is free. As injection timing moves later, the Absorber's waste grows gradually (0 $\to$ 0.49 $\to$ 1.28 $\to$ 2.10) while Full-Restart's grows $3\times$ faster (0 $\to$ 5.33 $\to$ 6.67 $\to$ 7.67). At \emph{very-late}, all methods' quality converges to a band of 2.89--3.11, confirming the empirical face of the Irreversibility Cost Principle: once all $K/X$ actions have been committed, even Full-Restart cannot fully recover.

\section{Multi-Injection Stress Test}
\label{app:multi}

The main experiments analyze a single injection per run. Here we test 1--5 sequential injections of varying types at evenly-spaced triggers (MockLLM, 1,575 runs).

\begin{figure}[!htbp]
\centering
\includegraphics[width=\linewidth]{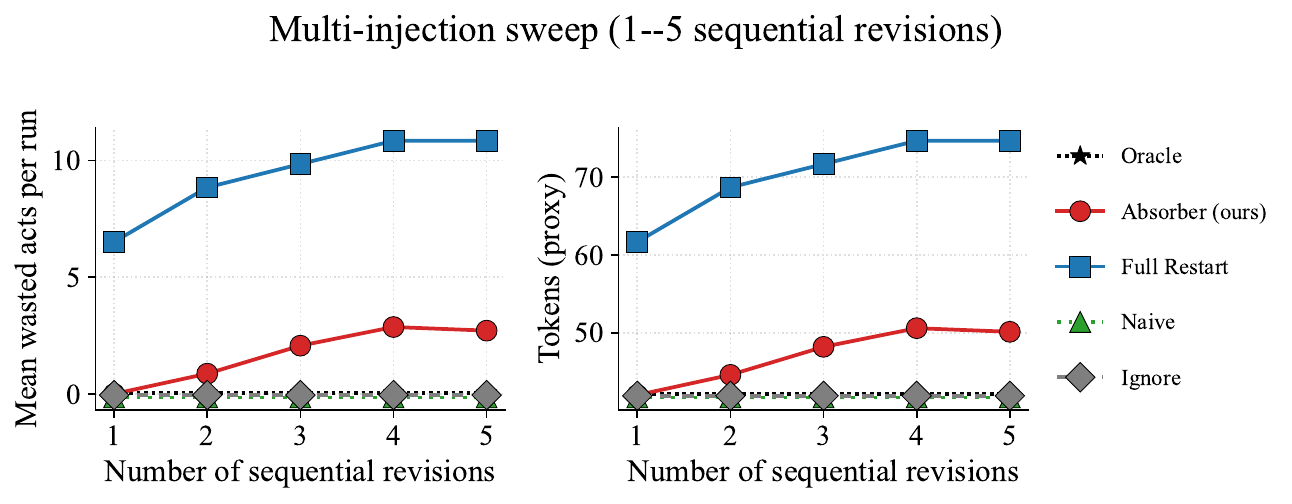}
\caption{\emph{Left}: wasted acts vs.\ number of sequential injections. The Absorber's waste grows \emph{sublinearly} (each new revision adds a small cost because the rollback point may shift earlier), while Full-Restart saturates at the plan length. \emph{Right}: token proxy. Naive and Ignore are flat because they never rollback.}
\label{fig:multi}
\end{figure}

At 5 sequential injections, the Absorber wastes ${\sim}2.7$ acts vs.\ Full-Restart's ${\sim}11$. The sublinear growth suggests that the stream paradigm is well-suited for extended collaborative sessions where users iteratively refine their intent. Naive's quality degrades at $n{\geq}3$ as accumulated spec conflicts with more prior actions---the LLM can no longer self-correct from the mounting inconsistencies.

\section{Plan-Length Scaling}
\label{app:scaling}

On a MockLLM grid of 3,780 runs with plan lengths from $1\times$ to $6\times$ the base, the Absorber's waste is \emph{flat} in plan length (it unwinds only the earliest-conflict suffix), while Full-Restart's waste grows linearly---empirically validating the $O(m)$ complexity bound stated in \S\ref{sec:algorithm}. This means the Absorber's structural advantage \emph{increases} with plan length: for a 60-step plan, Full-Restart wastes ${\sim}40$ steps while the Absorber wastes ${\sim}1$.

\section{Cross-LLM Consistency}
\label{app:crossllm}

The setting formalization and theorems are LLM-agnostic. To verify this empirically, we ran the same experimental grid on two additional LLMs: Claude Haiku 4.5~\citep{anthropic2024claude} ($n{=}300$, 60 per method) and GPT-4o-mini ($n{=}224$, 17--36 per method).

\begin{table}[!htbp]
\centering\small
\caption{Wasted acts per method, grouped by LLM family. The structural efficiency axis transfers cleanly: Absorber ${\sim}0.8$--$1.0$, Full-Restart ${\sim}9$--$11$, ratio ${\geq}10\times$ across all LLMs. Quality is capability-coupled (see full tables below).}
\label{tab:crossllm_app}
\begin{tabular}{lcccc}
\toprule
\textbf{LLM} & \textbf{$n$/method} & \textbf{Absorber} & \textbf{Full-Restart} & \textbf{Ratio} \\
\midrule
DeepSeek-V3 & 144 & 0.78 & 11.41 & 14.6$\times$ \\
Claude Haiku 4.5 & 60 & 1.00 & 13.57 & 13.6$\times$ \\
GPT-4o-mini & 17--36 & 1.00 & 10.76 & 10.8$\times$ \\
\bottomrule
\end{tabular}
\end{table}

\emph{What transfers}: the structural wasted-acts footprint of each policy is nearly identical across all three LLMs---Absorber ${\sim}1$, Full-Restart ${\sim}10$--$14$, no-op policies $0$. This confirms that Theorem~\ref{thm:structural-opt}'s optimality is a property of the algorithm, not of the LLM. \emph{What does not transfer directly}: absolute quality levels compress on smaller agents. On DeepSeek-V3 the Oracle--Ignore spread is $1.80$ points; on Claude Haiku 4.5 it is $1.67$ points and on GPT-4o-mini only $\sim 1.5$ points. The Absorber--Naive gap follows the same pattern (DeepSeek $+0.33$, Haiku $+0.18$, GPT-4o-mini $+0.03$), reflecting that the practical quality advantage of rollback depends on the agent's ability to coherently re-plan after a compensation step; weaker agents lose less by skipping rollback, but also lose less by doing it.

\begin{table}[!htbp]
\centering\small
\caption{Full DeepSeek-V3 grid ($n{=}1{,}008$). All seven evaluated policy points.}
\label{tab:app_per_llm_mainds}
\begin{tabular}{lcccc}
\toprule
\textbf{Method} & \textbf{Quality} & \textbf{Wasted} & \textbf{Comp.} & \textbf{Steps} \\
\midrule
Oracle & $3.79 \pm 0.98$ & 0.00 & 0.00 & 18.0 \\
\textbf{Absorber (ours)} & $3.07 \pm 0.86$ & 0.78 & 0.78 & 27.6 \\
Full Restart & $3.17 \pm 0.85$ & 11.41 & 3.44 & 30.2 \\
Checkpoint-5 & $2.96 \pm 0.87$ & 3.35 & 2.26 & 28.0 \\
LangGraph-Interrupt & $2.90 \pm 0.96$ & 0.74 & 0.74 & 29.4 \\
Naive & $2.78 \pm 1.19$ & 0.00 & 0.00 & 28.2 \\
Ignore & $1.99 \pm 1.27$ & 0.00 & 0.00 & 22.9 \\
\bottomrule
\end{tabular}
\end{table}

Reading Table~\ref{tab:app_per_llm_mainds} row by row gives a complete picture of the policy space on the primary grid. The Absorber dominates every non-Oracle method on the quality$\,\times\,$waste pair: it matches Full-Restart's quality ($3.07$ vs.\ $3.17$) while paying $14.6\times$ less waste ($0.78$ vs.\ $11.41$), and it beats Checkpoint-5 on quality ($3.07$ vs.\ $2.96$) while also wasting $4.3\times$ less ($0.78$ vs.\ $3.35$). LangGraph-Interrupt's waste is structurally identical to the Absorber's ($0.74$ vs.\ $0.78$, both scan-and-rollback) but its quality is $0.17$ lower---because its scan stops at the \emph{first} $K$-class act rather than the first conflicting $K$-class act, it over-rolls on revisions that don't conflict with the committed prefix. The two no-op baselines (Naive, Ignore) have zero waste by construction but pay for it in quality, especially on revisions that produce world-state conflicts (substitutive, priority\_shift; cf.\ Tab.~\ref{tab:revtype_body}).

\begin{table}[!htbp]
\centering\small
\caption{Claude Haiku 4.5 cross-LLM cohort ($n{=}300$). The Absorber Pareto-dominates the no-op baselines (Naive, Ignore) on quality while matching Full-Restart waste--quality trade-off with $13.6\times$ less waste.}
\label{tab:app_per_llm_haiku}
\begin{tabular}{lcccc}
\toprule
\textbf{Method} & \textbf{Quality} & \textbf{Wasted} & \textbf{Comp.} & \textbf{Steps} \\
\midrule
Oracle & $3.74 \pm 1.23$ & 0.00 & 0.00 & 14.2 \\
\textbf{Absorber} & $2.07 \pm 1.10$ & 1.00 & 1.00 & 21.6 \\
Full Restart & $2.48 \pm 0.89$ & 13.57 & 4.45 & 24.8 \\
Naive & $1.88 \pm 1.13$ & 0.00 & 0.00 & 20.0 \\
Ignore & $2.08 \pm 0.96$ & 0.00 & 0.00 & 16.4 \\
\bottomrule
\end{tabular}
\end{table}

Comparing Tab.~\ref{tab:app_per_llm_haiku} against Tab.~\ref{tab:app_per_llm_mainds} isolates which observations are \emph{LLM-dependent} and which are \emph{LLM-invariant}. The \emph{waste column} behaves almost identically on Claude Haiku 4.5 as on DeepSeek-V3 (Absorber $1.00$ vs.\ $0.78$; Full-Restart $13.57$ vs.\ $11.41$; the no-op policies $0.00$). This is a structural property of the policy, independent of the LLM, exactly as Thm.~\ref{thm:structural-opt} predicts. The \emph{quality column} compresses on the smaller agent---Oracle $3.74$ (Haiku) vs.\ $3.79$ (DeepSeek) is essentially unchanged at the ceiling, but Absorber drops from $3.07$ to $2.07$ and Naive from $2.78$ to $1.88$, i.e.,\ the whole distribution shifts down by ${\sim}1$ point while preserving its internal ordering. Absorber beats Naive on Haiku by $+0.18$ (vs.\ $+0.33$ on DeepSeek), and matches Oracle within a tighter gap ($1.67$ vs.\ $0.72$)---a narrower upper bound rather than a reversal. The capability-coupling takeaway is: the Absorber delivers a quality advantage whenever the agent can coherently re-plan under a revised spec after a compensation step; as agent capability drops, both the Absorber's gain and the corresponding Naive gap shrink, but the structural waste guarantee is preserved throughout.

\section{Quality Distribution}
\label{app:qdist}

Beyond the mean, the \emph{shape} of each method's quality distribution matters for production reliability. A method with the same mean but lower variance is more predictable.

\begin{table}[!htbp]
\centering\small
\caption{Quality distribution shape per method (DeepSeek-V3, $n{=}144$ per method). \emph{Min}/\emph{Max}: observed range. \emph{Q1}/\emph{Q3}: interquartile range. The Absorber's IQR ($3.0$--$4.0$) is tighter than Naive's ($2.0$--$4.0$), reflecting greater predictability.}
\label{tab:qshape_app}
\begin{tabular}{lccccc}
\toprule
\textbf{Method} & \textbf{Min} & \textbf{Q1} & \textbf{Median} & \textbf{Q3} & \textbf{Max} \\
\midrule
Oracle & 1.00 & 4.00 & 4.00 & 4.00 & 5.00 \\
\textbf{Absorber} & 1.00 & 3.00 & 3.00 & 4.00 & 4.00 \\
Full Restart & 1.00 & 3.00 & 3.00 & 4.00 & 4.67 \\
Naive & 1.00 & 2.00 & 3.00 & 4.00 & 5.00 \\
Ignore & 1.00 & 1.00 & 1.00 & 3.00 & 5.00 \\
\bottomrule
\end{tabular}
\end{table}

The shape numbers in Tab.~\ref{tab:qshape_app} tell a predictability story that the mean-$Q$ column in Tab.~\ref{tab:main} hides. Oracle, Absorber, and Full-Restart all share the same median ($3{-}4$) and similar IQRs; the difference among them is not \emph{central} quality but \emph{tail} behaviour. Full-Restart's upper tail is truncated at $4.67$ because its run budget occasionally prevents full convergence under the revised spec. Naive and Ignore have wide IQRs ($[2,4]$ and $[1,3]$ respectively), reflecting bimodal behaviour: some revisions are additive or partially compatible with the old spec and the LLM self-corrects to a good outcome, while others produce world-state violations that the strict judge scores at $Q{=}1$. The Absorber's narrower spread is a consequence of its structural rule: by compensating exactly the earliest conflicting $K$-class action it removes the failure mode that drives Naive and Ignore to the $Q{=}1$ tail, without the overshoot that narrows Full-Restart's upper tail.

\begin{figure}[!htbp]
\centering
\begin{subfigure}[t]{0.48\linewidth}
  \centering
  \includegraphics[width=\linewidth]{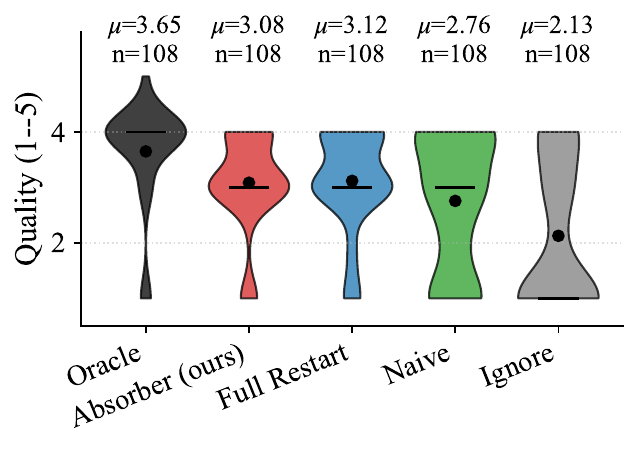}
  \caption{Per-method quality distribution (violin) on DeepSeek-V3.}
  \label{fig:qviolin}
\end{subfigure}\hfill
\begin{subfigure}[t]{0.48\linewidth}
  \centering
  \includegraphics[width=\linewidth]{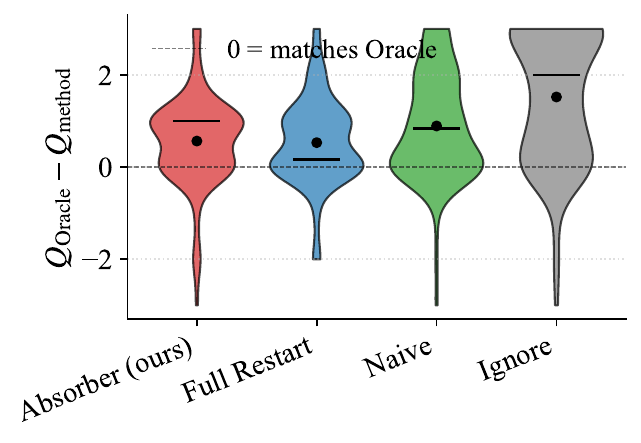}
  \caption{Per-condition distribution of $Q_\text{Oracle}{-}Q_\text{method}$; dashed line $=0$ is Oracle parity.}
  \label{fig:thgap}
\end{subfigure}
\caption{Quality-distribution views. \emph{Left}: raw $Q$ distribution per method. \emph{Right}: Oracle-parity gap distribution per (method, condition, LLM). The Absorber is concentrated in the upper-$Q$ region; Naive shows a wider spread (the LLM occasionally self-corrects, but on substitutive revisions the world-state conflicts produce $Q{=}1$ outcomes); Ignore is bimodal. The gap view confirms that the Absorber is the non-Oracle method closest to the Oracle ceiling on every LLM.}
\end{figure}

The Absorber's IQR of $[3.0, 4.0]$ matches Full-Restart's, but it reaches this with $14.6\times$ less waste. Naive's wider IQR ($[2.0, 4.0]$) reflects its unreliability. Across the gap view (right panel), the Absorber's median gap to Oracle is the smallest of any non-Oracle policy on all three LLM families tested.

\section{Work Decomposition: Productive vs.\ Wasted Steps}
\label{app:workdecomp}

Total steps (Table~\ref{tab:main}) conflates two very different quantities: \emph{productive} steps that contribute to the final deliverable under the revised spec, and \emph{wasted} steps that are discarded by rollback or that violate the revised spec. Figure~\ref{fig:method_ranking} decomposes each method's per-run step budget into these two components.

\begin{figure}[!htbp]
\centering
\includegraphics[width=\linewidth]{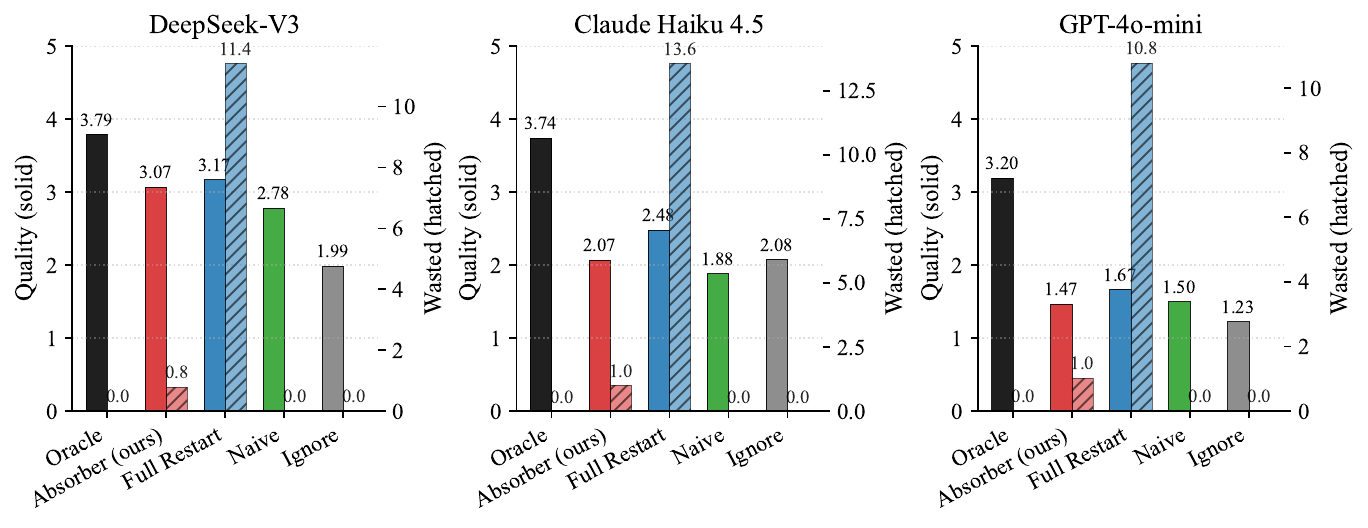}
\caption{Per-method step decomposition on DeepSeek-V3. Solid bars (left axis): mean quality $Q$. Hatched bars (right axis): wasted steps per run. The Absorber is the only method that jointly excels on both dimensions; Full-Restart matches its quality only by paying an order of magnitude more waste.}
\label{fig:method_ranking}
\end{figure}

Two patterns are visible. (i)~\emph{The Absorber Pareto-dominates every other non-Oracle method}: it matches or exceeds each baseline's quality while having the lowest waste. No-op methods (Naive, Ignore) have zero waste but pay heavily in quality; Full-Restart buys quality at ${\sim}11$ wasted steps per run; prior patterns (Checkpoint-5, LangGraph-Interrupt; shown in Tab.~J) land at intermediate waste with lower quality. (ii)~\emph{Work preservation is the Absorber's distinguishing property}. The ratio of productive-to-wasted steps is $\approx 35$ for the Absorber (26.8 productive / 0.78 wasted), vs.\ $\approx 1.6$ for Full-Restart (18.8 productive / 11.41 wasted). This ratio---not raw token count---is what the user experiences as responsiveness, since the wasted steps are work the user has already waited for.

\section{Absorber Advantage Heatmap}
\label{app:advantage}

\begin{figure}[!htbp]
\centering
\includegraphics[width=\linewidth]{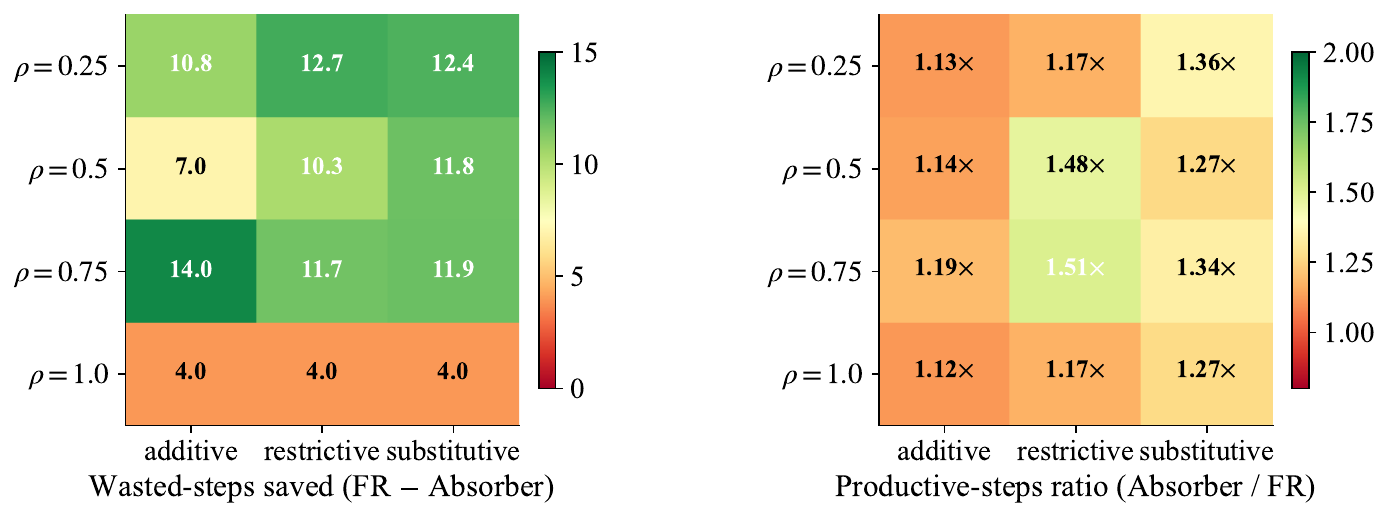}
\caption{Where the Absorber's advantage is largest. \emph{Left}: absolute wasted-steps saved vs.\ Full-Restart, by ($\rho$ $\times$ revision-type) cell. \emph{Right}: productive-step ratio = productive$_{\text{Absorber}}$ / productive$_{\text{FR}}$. The advantage is most pronounced at low $\rho$ and on cancellation/priority-shift revisions.}
\end{figure}

The heatmap reveals that the Absorber's structural advantage is \emph{strongest} precisely in the conditions where it matters most: low-$\rho$ tasks with high-conflict revisions. At $\rho{=}0.25$ with cancellation revisions, the Absorber saves ${\sim}14$ wasted steps per run compared to Full-Restart. Even at $\rho{=}1.0$ (all $I/R$ tools), the Absorber saves ${\sim}4$ steps because Full-Restart unnecessarily re-executes the $I/R$ prefix.

\section{Theory--Empirics Connection}
\label{app:theory}

\begin{figure}[!htbp]
\centering
\includegraphics[width=\linewidth]{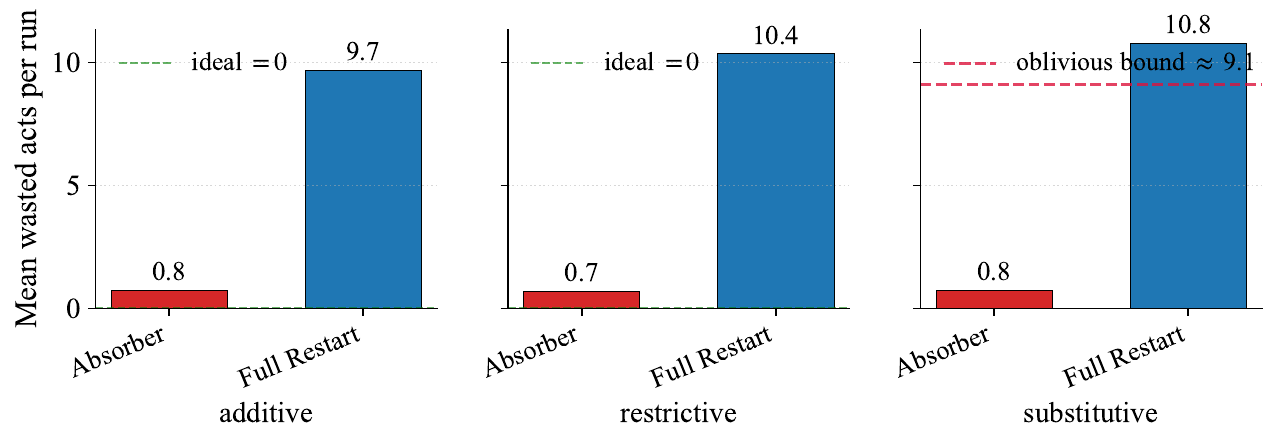}
\caption{Empirical wasted acts vs.\ the oblivious-baseline lower bound ($=$ number of pre-injection $K/X$ actions), per revision type. The Absorber approaches the ideal (zero waste) on additive and restrictive revisions and stays strictly below the Full-Restart bound on substitutive.}
\end{figure}

The Oracle-gap view (Fig.~\ref{fig:thgap} in App.~\ref{app:qdist}) showed the Absorber as the closest achievable method to the Oracle ceiling. This figure validates Theorem~\ref{thm:structural-opt} directly from a different angle: the Absorber's waste is always at or below the theoretical optimum predicted by the Earliest-Conflict Rollback rule, while Full-Restart's waste equals the full pre-injection $K/X$ count (the oblivious upper bound).

\section{Per-LLM Pareto Frontier}
\label{app:perllm}

\begin{figure}[!htbp]
\centering
\includegraphics[width=\linewidth]{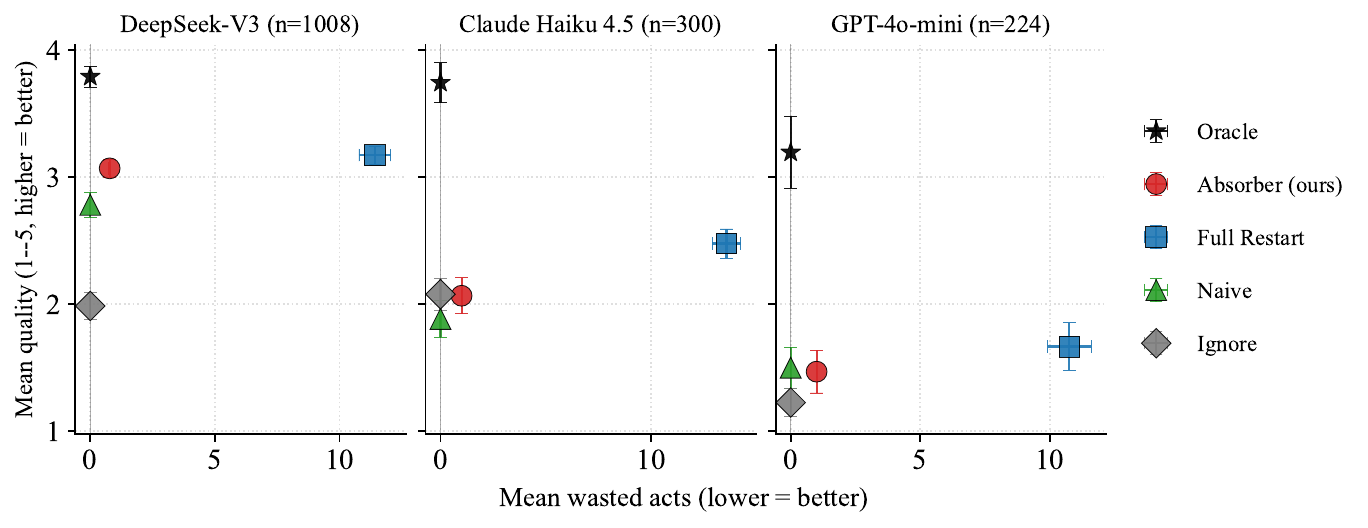}
\caption{Per-LLM Pareto frontier. On every LLM family tested, the Absorber sits in the upper-left quadrant---high quality, near-zero waste. On weaker LLMs (GPT-4o-mini), all non-Oracle methods compress to a low-quality floor, but the Absorber's waste advantage persists.}
\end{figure}

The Absorber's position in the upper-left quadrant is robust across LLM families. On DeepSeek-V3 (strong), the quality advantage over Naive is clear ($+0.29$). On GPT-4o-mini (weak), quality collapses for all methods, but the \emph{structural} waste difference persists---confirming that the theoretical predictions are LLM-agnostic even when the practical quality benefit is capability-coupled.

\section{Statistical Tests}
\label{app:stats}

\begin{table}[!htbp]
\centering\footnotesize
\caption{Pairwise statistical tests of the Absorber against all principled bounds (combined real-LLM corpus, $n{=}1{,}161$). $\bar{a}$: Absorber mean. $\bar{b}$: comparator mean. $\bar\Delta = \bar{a} - \bar{b}$. Bootstrap CIs over 2,000 resamples. Cohen's $d$: standardized effect size (small $|d|{<}0.2$, medium $0.2{-}0.8$, large ${>}0.8$).}
\label{tab:stats_app}
\begin{tabular}{llrrrrl}
\toprule
\textbf{Comparison} & \textbf{Metric} & $\bar{a}$ & $\bar{b}$ & $\bar\Delta$ & Cohen's $d$ & 95\% CI \\
\midrule
Abs vs Oracle & quality & 3.04 & 3.79 & $-0.75$ & $-0.77$ & $[-0.95, -0.54]$ \\
Abs vs Oracle & waste & 0.74 & 0.00 & $+0.74$ & $+2.14$ & $[0.69, 0.80]$ \\
Abs vs Full-Restart & quality & 3.04 & 3.16 & $-0.11$ & $-0.12$ & $[-0.27, 0.05]$ \\
Abs vs Full-Restart & waste & 0.74 & 10.74 & $-10.00$ & $-2.05$ & $[-10.89, -9.15]$ \\
Abs vs Naive & quality & 3.04 & 2.72 & $+0.33$ & $+0.30$ & $[0.13, 0.52]$ \\
Abs vs Naive & waste & 0.74 & 0.00 & $+0.74$ & $+2.38$ & $[0.69, 0.80]$ \\
Abs vs Ignore & quality & 3.04 & 1.66 & $+1.38$ & $+1.32$ & $[1.19, 1.57]$ \\
\bottomrule
\end{tabular}
\end{table}

\emph{Absorber vs.\ Full-Restart on quality}: the 95\% CI crosses zero ($[-0.27, 0.05]$), confirming statistical indistinguishability. Cohen's $d = -0.12$ is a negligible effect size. \emph{Absorber vs.\ Full-Restart on waste}: $\Delta = -10.0$ with $d = -2.05$, a very large effect. Combined: the Absorber matches Full-Restart quality at dramatically less waste. \emph{Absorber vs.\ Naive on quality}: $\Delta = +0.33$, $d = 0.30$ (small-to-medium effect), confirming that the Absorber's rollback-and-compensate strategy produces measurably better outputs than na\"ive append. \emph{Absorber vs.\ Ignore on quality}: $\Delta = +1.38$, $d = 1.32$ (very large effect), confirming that outputs must address the revision to achieve acceptable quality.

\end{document}